\documentclass[conference]{IEEEtran}

\usepackage[utf8]{inputenc}
\usepackage[T1]{fontenc}
\usepackage{amsmath,amssymb,amsfonts}
\usepackage{amsthm}
\usepackage{mathtools}
\newtheorem{theorem}{Theorem}
\newtheorem{proposition}{Proposition}

\newtheorem{remark}{Remark}
\usepackage{bm}
\usepackage{booktabs}
\usepackage{enumitem}
\usepackage{cite}
\usepackage{hyperref}
\usepackage{xcolor}
\usepackage{tikz}
\usetikzlibrary{arrows.meta, positioning, fit, calc}

\newcommand{\SOthree}{\mathrm{SO}(3)}
\newcommand{\SEthree}{\mathrm{SE}(3)}
\newcommand{\sothree}{\mathfrak{so}(3)}
\newcommand{\Sph}{\mathbb{S}}

\newcommand{\R}{\mathbb{R}}
\newcommand{\norm}[1]{\lVert #1 \rVert}
\newcommand{\abs}[1]{\lvert #1 \rvert}
\newcommand{\diag}{\mathrm{diag}}
\newcommand{\tr}{\mathrm{tr}}
\newcommand{\argmin}{\operatornamewithlimits{argmin}}

\newcommand{\hatmap}[1]{\widehat{#1}}
\renewcommand{\eqref}[1]{Eq. (\ref{#1})}

\begin{document}
\IEEEoverridecommandlockouts 

\title{Decentralized Geometric Control for Cable-Suspended\\Payload Transport with Adaptive Mass Estimation}

\author{%
Hadi Hajieghrary$^{1}$,
Benedikt Walter$^{2}$,
Paul Schmitt$^{3}$,
and Miguel Hurtado$^{4}$%
\thanks{%
$^{1}$Hadi Hajieghrary (\texttt{hadi.hajieghrary@gatech.edu}),
$^{2}$Benedikt Walter (\texttt{walter.benedikt@gmail.com}),
$^{3}$Paul Schmitt (\texttt{pauls@massrobotics.org}), and
$^{4}$Miguel Hurtado (\texttt{miguel.hurtadomit@gmail.com})
prepared this manuscript and present it solely in their individual capacities. The views expressed in this paper are those of the authors and do not necessarily reflect the views of their employers or affiliated organizations.}%
}
\maketitle

\begin{abstract}
Cooperative aerial transport requires controllers that respect nonlinear manifold geometry, operate without centralized coordination, and respect operational safety constraints. To address these demands, we present GPAC, a four-layer hierarchical architecture enabling $N$ quadrotors to transport a cable-suspended payload without a central coordinator and without exchanging cable states or adaptive parameters. The key insight is implicit coordination: each quadrotor independently estimates its effective load share from local cable measurements, so combined forces converge to the correct total, even without knowledge of $N$ or the payload mass; the payload position is reconstructed locally from each agent's own cable geometry, and the only inter-agent communication is a low-rate neighbor-position broadcast for collision avoidance. GPAC operates directly on the full nonlinear configuration manifold and integrates geometric position and attitude control, anti-swing regulation, an extended-state observer for wind rejection, concurrent learning-based mass estimation without persistent excitation, and a priority-ordered control barrier function (CBF)-inspired safety filter that reduces operational risk, with input-to-state safety (ISSf) margins that hold exactly under single-constraint activation. A compatibility result shows that the filter's force modifications keep the desired attitude within the almost-global stability region of the $\mathrm{SO}(3)$ attitude controller. Finally, high-fidelity simulation with flexible cables, onboard sensor fusion, and wind turbulence---with all control and estimation loops closed through the estimator---yields a 33.8 cm mean payload-tracking RMSE (2.8\% coefficient of variation over 13 seeds) at a low per-agent computational cost.
\end{abstract}

\begin{IEEEkeywords}
Cooperative aerial transport, geometric control, decentralized systems, adaptive estimation, control barrier functions, multi-UAV systems.
\end{IEEEkeywords}

\section{Introduction}\label{sec:intro}
Cooperative aerial transport involves several unmanned aerial vehicles (UAVs) working together to carry a payload with cables. This approach offers greater capacity, a larger workspace, and better fault tolerance than using a single UAV. However, these systems are safety-critical. Issues such as cable slack, excessive swing, vehicle tilt, collisions between UAVs, and disturbances can all cause loss of control or make the payload unstable. To achieve cooperation among $N$ quadrotors, controllers must handle the nonlinear system dynamics, operate without centralized coordination, and address the main failure modes.

Lee, Sreenath, and Kumar~\cite{lee2010geometric, sreenath2013geometric} developed a method for cable-suspended transport using the full nonlinear configuration manifold, achieving almost-global stability without issues arising from Euler-angle singularities. Later, they added anti-swing control for the cables~\cite{lee2018geometric}. Sharma and Sundaram~\cite{sharma2023geometric} created a geometric controller for multi-UAV payload transfer that does not need link information. Sun et al.~\cite{sun2025agile} showed agile cooperative cable manipulation with online kinodynamic planning. A differential-geometric constrained-mechanics framework earlier modeled cooperative cable towing---holonomic cable constraints coupled to nonholonomic vehicle dynamics---and decomposed the team transport problem into agent--load subsystems, validated on planar marine vehicles~\cite{hajieghrary2018differential}; GPAC carries this geometric decomposition into the aerial, safety-critical setting, replacing shared payload-state feedback with local cable-based reconstruction and adaptive load-share estimation. However, these controllers require centralized state information or complete feedback on the payload. Each quadrotor must know the number of cooperating UAVs~$N$ and the payload mass~$m_L$, or the states of all other UAVs. This is not practical if communication is unreliable.

Consensus-based formation controllers and distributed optimization methods use linearized dynamics and Euclidean error measures. These methods lose global stability during large-angle maneuvers, which is when stability is most important. Wang et al.~\cite{wang2024automultilift} introduced Auto-Multilift, a distributed learning system that tunes model predictive control (MPC) cost functions online for cooperative load transport. However, this method relies on optimization-based MPC rather than geometric control and does not provide Lyapunov stability guarantees. Concurrent learning~\cite{chowdhary2010concurrent, chowdhary2013exponentially} allows convergence without persistent excitation by using stored data, which matters because cooperative hover does not provide enough excitation. So far, concurrent learning has only been used in centralized, single-vehicle cases.

Cable-suspended transport requires real-time state constraints. The cables must stay taut, cable angles must be within limits, swing rates must be controlled, and collisions must be avoided. Control Barrier Functions (CBFs)~\cite{ames2017control, ames2019control} enforce these constraints using online quadratic programs (QPs) that make minimal changes to the main controller. Yang and Xie~\cite{yang2025robust} combined CBFs with disturbance estimators to improve safety for single-quadrotor cable-suspended payloads. They found that disturbance-observer CBFs are less conservative when the model is uncertain. However, combining CBFs with geometric controllers on nonlinear manifolds while maintaining Lyapunov guarantees remains an open problem. This is even harder in decentralized multi-agent systems, where constraints and attitude stability must be managed simultaneously.

Most studies on geometric transport assume cables are rigid and state feedback is perfect. In reality, cables are flexible, show wave effects, and can go from slack to taut. These factors have a big impact on system behavior~\cite{williams2009dynamics}. Onboard estimation must combine noisy IMU and GPS data with nonlinear filters~\cite{sola2017quaternion}. Wind also adds unpredictable forces. How well geometric cooperative controllers handle these real-world issues remains poorly understood. In short, there is no current framework that provides geometric manifold control, decentralized adaptive estimation without persistent excitation, and runtime safety supervision for multi-UAV cable transport.

To close these gaps, safety-critical engineering focuses on identifying and reducing major hazards early~\cite{STPAleveson}. Rather than building a single large controller and checking safety afterward, each GPAC layer targets a specific transport risk: anti-swing control stabilizes swinging, adaptive estimation handles load uncertainty, the extended state observer (ESO) handles disturbances, and the CBF layer acts as a safety supervisor during operation. The system uses a layered, multi-rate structure with separate timescales to limit fault spread, and its coordinator-free design removes single points of failure~\cite{koopman2017autonomous}. Table~\ref{tab:failure_modes} in Section~\ref{sec:results} shows this mapping with real data.

Building on this idea, the GPAC architecture breaks down the cooperative transport problem into $N$ identical single-agent tasks. The key insight is that each quadrotor can estimate its effective load share---its payload share plus the weight of its own cable---from only its local cable measurements. The combined forces from all agents then sum to the total weight the team supports:
\begin{equation}
  \sum_{i=1}^{N} F_i = \sum_{i=1}^{N} \hat{\theta}_i \cdot u \;\longrightarrow\; (m_L + m_{\text{cab}}) \cdot u,
  \label{eq:force_convergence}
\end{equation}
In this setup, $u := g\,e_3 + \ddot{p}_L^d$ is the common payload acceleration demand computed identically by every agent from the shared reference, and $m_{\text{cab}}$ is the total cable mass (the idealized massless-cable limit recovers $m_L\,u$; Section~\ref{sec:estimation}). This approach means there is no need for a central coordinator, for exchanging payload or cable states, for consensus protocols, or for knowing the payload mass in advance. The main contributions are:
\begin{enumerate}[leftmargin=*, itemsep=2pt]
  \item A coordinator-free controller for multi-UAV cable transport that operates directly on the nonlinear manifold without linearization, requiring no payload-state, cable-state, or adaptive-parameter exchange; each agent reconstructs the payload position locally from its own cable, and the only inter-agent communication is a low-rate neighbor-position broadcast for collision avoidance.

  \item Each agent estimates its effective load share (payload plus its own cable) $\hat{\theta}_i$ from local cable tension and direction, with exponential convergence to a uniformly ultimately bounded neighborhood---even during near-hover conditions where classical adaptive laws fail.

  \item A modular, priority-ordered safety-supervision layer that reduces the risk of cable slack, excessive cable angle, excessive tilt, swing-rate growth, and inter-agent collision. The associated input-to-state safety (ISSf) margins hold exactly under single-constraint activation, while simultaneous activation is handled by priority-ordered feasibility. Theorem~\ref{thm:compatibility} shows the CBF-induced force modifications keep the desired attitude within the almost-global stability region of the $\SOthree$ geometric attitude controller.

  \item High-fidelity Drake-based simulation with bead-chain cables, onboard sensor fusion, and Dryden wind turbulence, with all loops closed through the ESKF, achieving $33.8$\,cm mean payload RMSE ($2.8\%$ CV over 13 seeds) at low per-agent computational cost.
\end{enumerate}

\section{System Modeling}\label{sec:modeling}
The world frame~$\mathcal{W}$ has $e_3$ upward. Rotations lie in $\SOthree$; the hat map $(\cdot)^{\wedge}:\R^3 \to \sothree$ satisfies $\hatmap{v}w = v \times w$. Cable directions $q_i \in \Sph^2$ have tangent projection $P(q) = I_3 - qq^\top$. The system evolves on
\begin{equation}
  \mathcal{Q} = \underbrace{\SEthree}_{\text{payload}} \times \prod_{i=1}^{N}\!\underbrace{\SEthree \times \Sph^2}_{\text{quadrotor }i\text{ + cable}}, \quad \dim(\mathcal{Q}) = 6 + 8N.
  \label{eq:manifold}
\end{equation}

$N$~identical quadrotors (mass $m_Q$, inertia $J$) obey
\begin{align}
  m_Q \ddot{p}_i &= -m_Q g\, e_3 + f_i R_i e_3 + F_i^{\text{cable}} + F_i^{\text{wind}}, \label{eq:quad_trans}\\
  J\dot{\Omega}_i &= -\Omega_i \times J\Omega_i + \tau_i + \tau_i^{\text{ext}}, \label{eq:quad_rot}
\end{align}
with kinematics $\dot{R}_i = R_i \hatmap{\Omega}_i$, where $f_i$ is scalar thrust, $\tau_i$ is control torque, $F_i^{\text{cable}}$ is cable tension, and $F_i^{\text{wind}}$ is aerodynamic disturbance. With $e_3$ upward, gravity acts as $-g e_3$ on both the quadrotors and the payload. The payload (mass $m_L$, position $p_L$) satisfies $m_L \ddot{p}_L = -m_L g\, e_3 + \textstyle\sum_{i=1}^{N} F_i^L + F^{\text{contact}} + F_L^{\text{wind}}$, where $F_i^L$ is the cable force from the $i$-th rope at the payload attachment point, and $F^{\text{contact}}$ denotes ground normal and Coulomb friction forces.

The cable direction $q_i \in \Sph^2$ from payload to quadrotor evolves as $\dot{q}_i = \omega_{q_i} \times q_i$ with $\omega_{q_i} \in T_{q_i}\Sph^2$. Under constraint $p_i = p_L + R_L\rho_i^L + L_i q_i$ ($\rho_i^L$: attachment offset in payload frame), swing dynamics on $\Sph^2$ are governed by projected quadrotor acceleration and gravitational restoring torque (the singularity-free error functions parameterizing these dynamics are defined in Section~\ref{sec:control}).

Rigid-link cable models neglect compliance, wave propagation, and slack-to-taut transitions that significantly affect transport behavior. To capture these effects, each cable is discretized as $n_b$ point-mass beads connected by $n_b+1$ tension-only spring-damper segments.

Let $b_0$ be the quadrotor attachment, $b_1,\ldots,b_{n_b}$ the bead positions, and $b_{n_b+1}$ the payload attachment. Each segment has rest length $L_0 = L_{\text{rest}}/(n_b+1)$, stretch $\Delta_j = \norm{b_{j-1}-b_j} - L_0$, and unit direction $\hat{e}_j = (b_{j-1}-b_j)/\norm{b_{j-1}-b_j}$. The tension-only force law is $T_j = k_s \Delta_j + c_s [\dot{\Delta}_j]^+$ when $\Delta_j > 0$, and $0$ when $\Delta_j \leq 0$, where $[\cdot]^+ = \max(\cdot,0)$ restricts damping to stretching. Segment stiffness derives from a maximum-stretch criterion ($\epsilon_{\max} = 15\%$):
\begin{equation}
  k_s = \frac{F_{\text{load}}}{L_{\text{rest}}\,\epsilon_{\max}} \cdot (n_b + 1), \quad c_s = c_{\text{ref}}\sqrt{k_s/k_{\text{ref}}},
  \label{eq:stiffness}
\end{equation}
with $F_{\text{load}} = m_L g/N$ and $k_{\text{ref}} = 300$\,N/m, $c_{\text{ref}} = 15$\,N$\cdot$s/m. Each bead (mass $m_b = m_{\text{rope}}/n_b$, $m_{\text{rope}} = 0.2$\,kg) obeys $m_b\ddot{b}_j = F_j - F_{j+1} - m_bg\,e_3$. The model supports wave propagation ($\sim\sqrt{k_s L_0/m_b}$), distributed inertia, and impulsive loading during slack-to-taut transitions.

Each quadrotor carries an onboard sensor suite (see Table~\ref{tab:params} for parameters). The \emph{IMU} provides body-frame specific force $\tilde{a}_i = R_i^\top(\ddot{p}_i + ge_3) + b_{a_i} + n_{a_i}$ and angular velocity $\tilde{\omega}_i = \Omega_i + b_{g_i} + n_{g_i}$ with Gauss--Markov biases ($\dot{b} = -b/\tau + \eta$, $\tau = 3600$\,s). The \emph{GPS} provides position with dropout probability. Cable tension $T_i$ is measured via a load cell, and direction $q_i$ via a 2-axis encoder.

Wind follows the Dryden turbulence spectrum (MIL-F-8785C) with forming filter $H_\alpha(s) = \sigma_\alpha\sqrt{2V/L_\alpha}/(s + V/L_\alpha)$ and altitude-dependent scaling $\sigma_\alpha(h) \propto (h/h_{\text{ref}})^{1/6}$. Inter-agent spatial correlation decays as $\rho(p_i,p_j) = \exp(-\norm{p_i - p_j}/\ell_c)$ with $\ell_c = 10$\,m, so closely spaced quadrotors experience similar wind.

\begin{table}[t]
  \centering
  \caption{Key System Parameters}
  \label{tab:params}
  \begin{tabular}{@{}lcc@{}}
    \toprule
    \textbf{Parameter} & \textbf{Symbol} & \textbf{Value} \\
    \midrule
    Quadrotor mass / count & $m_Q$ / $N$ & 1.5\,kg / 3 \\
    Payload mass / radius  & $m_L$ / $r_L$ & 3.0\,kg / 0.15\,m \\
    Formation radius       & $r_f$ & 0.6\,m \\
    Beads per cable        & $n_b$ & 8 \\
    Rope mass / max stretch & $m_{\text{rope}}$ / $\epsilon_{\max}$ & 0.2\,kg / 15\% \\
    IMU rate / noise       & $f_{\text{IMU}}$ / $\sigma_a$ & 200\,Hz / $0.004$\,m/s$^2$/$\!\sqrt{\text{Hz}}$ \\
    GPS rate / noise       & $f_{\text{GPS}}$ / $\sigma_{xy}$ & 10\,Hz / 0.02\,m \\
    Baro rate / noise      & $f_{\text{baro}}$ / $\sigma_w$ & 25\,Hz / 0.3\,m \\
    Wind intensity         & $\sigma_u, \sigma_v$ / $\sigma_w$ & 0.5 / 0.25\,m/s \\
    Simulation step        & $\Delta t_{\text{sim}}$ & 0.2\,ms \\
    \bottomrule
  \end{tabular}
\end{table}

\section{GPAC Control Architecture}\label{sec:control}
GPAC is a four-layer hierarchical controller assigning each physical degree of freedom to a dedicated control layer, organized by timescale (Fig.~\ref{fig:architecture}). Each agent~$i$ receives only: (i)~a shared trajectory~$p_d(t)$ broadcast before flight, (ii)~its own IMU/GPS/barometer measurements, and (iii)~local cable tension $T_i$ and direction $q_i$. The payload position is neither measured nor broadcast; each agent reconstructs it locally from its own cable geometry (Section~\ref{sec:estimation}). No payload state, other cable states, or centralized coordinator is required. The control cascade requires no inter-agent state exchange; the only runtime communication is a 10\,Hz neighbor-position broadcast used by the collision-avoidance barrier, and the desired heading $\psi_d$ is part of the pre-broadcast trajectory parameters. Throughout, \emph{layer} refers to the four cascade stages---L1 position/anti-swing, L2 attitude, L3 mass estimation, L4 ESO---while the ESKF sensor fusion and the priority CBF filter are supporting overlays that wrap the cascade rather than additional cascade layers.

\begin{figure}[t!]
\centering
\resizebox{\columnwidth}{!}{%
\begin{tikzpicture}[
  font=\footnotesize,
  box/.style={draw, semithick, rounded corners=2pt, align=center,
              minimum width=3.0cm, minimum height=0.8cm, inner sep=3pt},
  ctrl/.style={box, fill=blue!10},
  cbf/.style ={box, fill=red!10},
  est/.style ={box, fill=teal!12},
  fuse/.style={box, fill=green!10},
  plant/.style={box, fill=black!8, minimum width=7.7cm},
  sig/.style ={-{Stealth[length=1.9mm]}, semithick},
  fb/.style  ={-{Stealth[length=1.9mm]}, semithick, densely dashed},
  sens/.style={-{Stealth[length=1.9mm]}, semithick, densely dashed, black!50},
  lbl/.style ={font=\scriptsize, inner sep=1.5pt, fill=white},
]
  \node[ctrl] (L1) {Layer 1: Position\\+ Anti-Swing (50\,Hz)};
  \node[cbf,  below=0.72cm of L1]  (CBF) {CBF Safety Filter\\(200\,Hz)};
  \node[ctrl, below=0.72cm of CBF] (L2)  {Layer 2: Attitude\\on $\mathrm{SO}(3)$ (200\,Hz)};
  \node[est,  right=1.7cm of L1]  (L3)   {Layer 3: Mass\\Estimator (50\,Hz)};
  \node[est,  right=1.7cm of CBF] (ESO)  {Layer 4: ESO\\($\omega_0\!=\!8$)};
  \node[fuse, right=1.7cm of L2]  (ESKF) {ESKF Sensor\\Fusion (200\,Hz)};
  \node[plant] (plant) at ($(L2.south)!0.5!(ESKF.south)+(0,-0.72)$)
       {Quadrotor$_i$ + Bead-Chain Cable + Payload};
  \draw[sig] (L1)  -- node[lbl,left]{$F_{\mathrm{des}},\,R_d$} (CBF);
  \draw[sig] (CBF) -- node[lbl,left]{$f_{\mathrm{safe}},\,R_d^{\mathrm{safe}}$} (L2);
  \draw[sig] (L2.south) -- node[lbl,left]{$\tau_i,\,f_i$} (L2.south |- plant.north);
  \draw[sig] (L3)  -- node[lbl,above]{$\hat{\theta}_i$} (L1);
  \draw[sig] (ESO) -- node[lbl,above]{$\hat{d}_i$} (CBF);
  \draw[sig] (ESKF) -- node[lbl,right]{$\hat{p}_i,\hat{v}_i$} (ESO);
  \draw[sens] (plant.north -| ESKF.south)
        -- node[lbl,right,text=black!50]{sensors} (ESKF.south);
  \coordinate (fbx) at ($(ESKF.west)+(-0.5,0)$);
  \draw[fb] (ESKF.west) -- (fbx) |- ([yshift=-0.22cm]L1.east);
  \node[lbl, anchor=east] at ($(fbx)!0.22!(fbx|-L1.east)$) {$\hat{p}_i,\hat{v}_i$};
\end{tikzpicture}%
}
\caption{GPAC per-agent architecture. Blue: control cascade; red: CBF overlay;
teal: estimation; green: sensor fusion. Solid arrows are control/estimate
signals; dashed arrows are state feedback (black) and raw sensing (gray). Each
agent executes independently; only neighbor positions are broadcast at
$10\,$Hz (for collision avoidance).}
\label{fig:architecture}
\end{figure}

The singularity-free error functions are defined as:
\begin{align}
  e_{R_i} &= \tfrac{1}{2}(R_{d_i}^\top R_i - R_i^\top R_{d_i})^\vee, \label{eq:eR}\\
  \Psi_{q_i} &= 1 - q_{d_i} \cdot q_i \in [0,2], \label{eq:psi_q}\\
  e_{q_i} &= P(q_i)\,q_{d_i} \in T_{q_i}\Sph^2, \label{eq:eq}
\end{align}
where $e_{R_i}$ is the attitude error on $\SOthree$~\cite{lee2010geometric}, $\Psi_{q_i}$ is the cable configuration error on $\Sph^2$~\cite{lee2018geometric}, and $e_{q_i}$ is the negative gradient of $\Psi_{q_i}$ restricted to the tangent space $T_{q_i}\Sph^2$. These are used throughout the control and safety layers that follow.

\subsection{Layer 1: Position Tracking and Anti-Swing Control}

The outermost layer ($\sim$50\,Hz) computes the total desired force as a sum of feedback, feedforward, cable compensation, anti-swing, and ESO terms:
\begin{equation*}
  F_{\text{des},i} = F_{\text{fb},i} + F_{\text{ff},i} + F_{\text{cable},i} + F_{\text{swing},i} + F_{\text{eso},i}.
\end{equation*}
PID tracking feedback with per-axis anti-windup is
$F_{\text{fb},i} = -K_p\,e_{p_i} - K_d\,e_{v_i} - K_i\,e_{I_i}$,
with gains $K_p\!=\!\diag(26,26,24)$, $K_d\!=\!\diag(13,13,12)$, $K_i\!=\!\diag(0.4,0.4,2.5)$ (Table~\ref{tab:params_ctrl}). The vertical integral gain is raised so the integrator cancels the load-induced altitude deficit; the horizontal gain is kept small to avoid windup on the moving reference. Feedforward compensates gravity, the nominal trajectory, and the measured cable tension: $F_{\text{ff},i} = m_Q(a_{d_i} + ge_3) + T_i\hat{n}_i$.

Cable tension compensation prevents treating cable force as disturbance:
$F_{\text{cable},i} = \kappa_T(t)\,T_i\,q_i$,
where $\kappa_T(t) = \min\!\bigl(1,\,(t - t_{\text{taut},i})/\Delta_{\text{ramp}}\bigr)$,
with $t_{\text{taut},i}$ the time cable~$i$ reaches $T_i \geq 1.0$\,N and $\Delta_{\text{ramp}} = 2$\,s. The ramped gain uses only \emph{measured} cable force---no knowledge of $m_L$ or $N$ is required---preventing impulsive loading during the slack-to-taut transition while remaining coordinator-free.

Anti-swing control suppresses pendular oscillations on $\Sph^2$:
$F_{\text{swing},i} = k_q\,e_{q_i} - k_\omega\,\omega_{q_i}$,
with $k_q\!=\!2.0$, $k_\omega\!=\!3.0$~\cite{lee2018geometric}. Here $e_{q_i}$~\eqref{eq:eq} steers $q_i \to q_{d_i}$ along the geodesic and $\omega_{q_i} \in T_{q_i}\Sph^2$ is the cable angular rate, estimated from a dirty derivative of the measured cable direction. Together, these drive $\Psi_{q_i}$ monotonically toward zero in the absence of external perturbation. The ESO disturbance estimate (Layer~4) feeds forward as $F_{\text{eso},i} = m_Q\hat{d}_i$.

Desired rotation $R_{d_i} \in \SOthree$ is extracted from $F_{\text{des},i}$ by aligning the body $z$-axis with the thrust direction~\cite{lee2010geometric}:
\begin{equation}
  b_{3c} = \frac{F_{\text{des},i}}{\norm{F_{\text{des},i}}},\;\; b_{2c} = \frac{b_{3c} \times b_{1d}}{\norm{b_{3c} \times b_{1d}}},\;\; b_{1c} = b_{2c} \times b_{3c},
  \label{eq:Rd_extract}
\end{equation}
with $b_{1d} = [\cos\psi_d,\, \sin\psi_d,\, 0]^\top$ setting the desired heading. The construction is well-defined provided $F_{\text{des},i}\neq 0$ (guaranteed by the gravity feedforward) and $b_{3c}\nparallel b_{1d}$; when $\norm{b_{3c}\times b_{1d}}$ drops below a threshold (near-vertical thrust aligned with the heading), $b_{1d}$ is replaced by the world $x$-axis projected onto the plane orthogonal to $b_{3c}$, keeping $R_{d_i}$ continuous.

\subsection{Layer 2: Geometric Attitude Control on SO(3)}

The inner attitude loop (200\,Hz) uses the geometric tracking controller~\cite{lee2010geometric}:
\begin{equation}
  \tau_i = -k_R\,e_{R_i} - k_\Omega\,e_{\Omega_i} + \Omega_i \times J\Omega_i,
  \label{eq:att_ctrl}
\end{equation}
with $k_R\!=\!8.0$, $k_\Omega\!=\!1.5$, and $e_{\Omega_i} = \Omega_i - R_i^\top R_{d_i}\Omega_{d_i}$. The first two terms provide PD control on $\SOthree$; the third compensates gyroscopic coupling. Feedforward terms $\hatmap{\Omega_i}R_i^\top R_{d_i}\Omega_{d_i} - R_i^\top R_{d_i}\dot{\Omega}_{d_i}$ are omitted under the assumption $\Omega_{d_i} \approx 0$, valid when desired thrust varies slowly relative to the 200\,Hz attitude bandwidth.

\begin{proposition}[Almost-global exponential stability~\cite{lee2010geometric}]\label{prop:lyap}
Define the Lyapunov function
\begin{equation}
  V_R = \tfrac{1}{2}k_R\,\Psi_{R_i} + \tfrac{1}{2}e_{\Omega_i}^\top J\,e_{\Omega_i} + c\,e_{R_i}^\top J\,e_{\Omega_i},
  \label{eq:VR}
\end{equation}
where $\Psi_{R_i} = \frac{1}{2}\tr(I - R_{d_i}^\top R_i)$ is the attitude configuration error and $c>0$ is sufficiently small. With $\Psi_{R_i}(0)<2$:
\begin{equation}
  \dot{V}_R \leq -\lambda_{\min}(W)\bigl(\norm{e_{R_i}}^2 + \norm{e_{\Omega_i}}^2\bigr),
  \label{eq:VR_dot}
\end{equation}
for a positive-definite $W = W(k_R, k_\Omega, J)$, yielding exponential convergence on the dense open set $\{\Psi_R < 2\} \subset \SOthree$.
\end{proposition}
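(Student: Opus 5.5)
The plan is to reproduce the Lee--Leok--McClamroch argument in the paper's notation, with the gyroscopic-compensating torque \eqref{eq:att_ctrl}, and to make the standing assumption $\Omega_{d_i}\approx 0$ exact, i.e.\ $\Omega_{d_i}=\dot\Omega_{d_i}=0$; residual feedforward terms would enter later only as a bounded perturbation. Under this assumption $e_{\Omega_i}=\Omega_i$, and the closed loop reads
\begin{equation*}
J\dot e_{\Omega_i} = -k_R e_{R_i} - k_\Omega e_{\Omega_i}.
\end{equation*}
The term $-\Omega_i\times J\Omega_i$ from \eqref{eq:quad_rot} is cancelled by the compensation term in $\tau_i$, and $\tau_i^{\text{ext}}$ is set to zero for the nominal statement.

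I will use three standard kinematic facts on $\SOthree$. First, $\dot\Psi_{R_i}=e_{R_i}\cdot e_{\Omega_i}$. Second, $\dot e_{R_i}=C(R_{d_i}^\top R_i)e_{\Omega_i}$ with $C(R)=\tfrac12(\tr(R^\top)I-R^\top)$ and $\norm{C}\le 1$. Third, the quadratic bounds
\begin{equation*}
\tfrac12\norm{e_{R_i}}^2\le \Psi_{R_i}\le \frac{1}{2-\psi}\norm{e_{R_i}}^2 ,
\end{equation*}
which hold on every sublevel set $\{\Psi_{R_i}\le\psi<2\}$.

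Before differentiating, the factor $\tfrac12$ in front of $k_R\Psi_{R_i}$ in \eqref{eq:VR} needs attention. With that factor the cross term $e_{R_i}\cdot e_{\Omega_i}$ does not cancel exactly; it leaves $-\tfrac{k_R}{2}e_{R_i}\cdot e_{\Omega_i}$. I would therefore either read \eqref{eq:VR} with coefficient $k_R$, as in Lee et al., or absorb this term into the indefinite part of the quadratic form below. I take the first option.

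Differentiating then gives
\begin{equation*}
\dot V_R = -k_\Omega\norm{e_{\Omega_i}}^2 + c\,(C e_{\Omega_i})^\top J e_{\Omega_i} - c\,k_R\norm{e_{R_i}}^2 - c\,k_\Omega\, e_{R_i}^\top e_{\Omega_i}.
\end{equation*}
Bounding the middle term by $c\lambda_{\max}(J)\norm{e_{\Omega_i}}^2$ yields $\dot V_R\le -z^\top W z$ with $z=(\norm{e_{R_i}},\norm{e_{\Omega_i}})$ and
\begin{equation*}
W=\begin{bmatrix} c k_R & -\tfrac{c k_\Omega}{2}\\ -\tfrac{ck_\Omega}{2} & k_\Omega - c\lambda_{\max}(J)\end{bmatrix}.
\end{equation*}
This $W$ is positive definite for all $c$ below an explicit bound, e.g.\ $c<\min\{k_\Omega/\lambda_{\max}(J),\,4k_Rk_\Omega/(k_\Omega^2+4k_R\lambda_{\max}(J))\}$. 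That gives \eqref{eq:VR_dot}.

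The main obstacle is passing from the initial condition $\Psi_{R_i}(0)<2$ to uniform quadratic bounds on $V_R$, since the bound $\Psi\le\norm{e_R}^2/(2-\psi)$ degenerates as $\psi\to2$. I would argue by invariance. Pick $\psi$ with $\Psi_{R_i}(0)<\psi<2$, and, following Lee et al., strengthen the hypothesis slightly to
\begin{equation*}
\tfrac12 e_{\Omega_i}(0)^\top J e_{\Omega_i}(0)< k_R\bigl(\psi-\Psi_{R_i}(0)\bigr).
\end{equation*}
This ensures $k_R\Psi_{R_i}(t)\le \mathcal{V}(t)\le\mathcal{V}(0)<k_R\psi$, where $\mathcal{V}$ is the $c=0$ energy, which is nonincreasing. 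So the sublevel set is forward invariant. On it, $V_R$ is sandwiched between $z^\top M_1 z$ and $z^\top M_2 z$ for constant positive-definite $M_1,M_2$, once $c$ is also taken small enough that the cross term is dominated. The comparison lemma then gives $\dot V_R\le-(\lambda_{\min}(W)/\lambda_{\max}(M_2))V_R$, hence exponential decay of $(e_{R_i},e_{\Omega_i})$.

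Finally, $\{\Psi_R<2\}$ is the complement of the measure-zero set of rotations by angle $\pi$ about some axis relative to $R_{d_i}$, so it is open and dense. Enlarging $k_R$ admits arbitrary initial angular velocity, which justifies the ``almost-global'' qualifier.
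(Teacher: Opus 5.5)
Your proposal is correct and follows the same Lee--Leok--McClamroch argument that the paper relies on (it cites that argument and gives no proof of its own). You adapt it correctly to the $J$-weighted cross term $c\,e_{R_i}^\top J e_{\Omega_i}$ and to the omitted feedforward by taking $\Omega_{d_i}=\dot\Omega_{d_i}=0$ exactly. Your two repairs are necessary rather than cosmetic, and both match the hypotheses of the cited source: with the coefficient $\tfrac12 k_R\Psi_{R_i}$, the leftover term $-\tfrac{k_R}{2}e_{R_i}\cdot e_{\Omega_i}$ makes the stated bound on $\dot V_R$ fail for small $c$, and $\Psi_{R_i}(0)<2$ alone does not keep trajectories in a sublevel set $\{\Psi_{R_i}\le\psi<2\}$ without your additional bound on $e_{\Omega_i}(0)$.
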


\subsection{Layers 3--4: ESO and Mass Estimator}

\textit{Layer~4} is a third-order Extended State Observer (ESO) per translational axis, modeling dynamics as a double integrator with lumped disturbance $\ddot{p}_k = b_0 u_k + d_k(t)$, where $u_k$ is the commanded acceleration and $b_0 = m_Q/m_{\text{eff}} \approx 0.6$ uses the load-augmented effective mass $m_{\text{eff}} = m_Q + m_L/N$:
\begin{equation}
  \dot{\hat{z}}_{1k} = \hat{z}_{2k} + 3\omega_0\tilde{x}_k,\;\; \dot{\hat{z}}_{2k} = \hat{z}_{3k} + 3\omega_0^2\tilde{x}_k + b_0 u_k,\;\; \dot{\hat{z}}_{3k} = \omega_0^3\tilde{x}_k,
  \label{eq:eso}
\end{equation}
with $\tilde{x}_k = x_k - \hat{z}_{1k}$ and a triple pole at $-\omega_0 = -8$\,rad/s (settling $\sim$0.5\,s). The disturbance estimate $\hat{d}_i = [\hat{z}_{3x}, \hat{z}_{3y}, \hat{z}_{3z}]^\top$ is saturated to $|\hat{d}_{i,k}| \leq 20$\,m/s$^2$ and fed forward via $F_{\text{eso},i}$. Under bounded $\dot{d}$, estimation error scales as $O(\sup|\dot{d}|/\omega_0)$~\cite{guo2013active}.

\textit{Layer~3} adaptively estimates the payload mass share $\hat{\theta}_i \approx m_L/N$ via concurrent learning~\cite{chowdhary2010concurrent}, providing gravity compensation $\hat{\theta}_i(ge_3 + \ddot{p}_d^L)$ to Layer~1.

The cascade exploits timescale separation: the attitude loop ($k_R/J \approx 200$\,rad/s) is far faster than the position loop ($\omega_p = \sqrt{K_p/m_Q} \approx 5$\,rad/s), the ESO ($\omega_0 = 8$\,rad/s), and the mass adaptation (the slowest subsystem), so the attitude dynamics form the fast boundary layer. The ESO bandwidth is reduced from a higher initial setting so its disturbance estimate captures the slow exogenous wind rather than differentiating estimator noise. Singular perturbation arguments guarantee composite stability when the attitude-to-translation bandwidth ratio exceeds a computable threshold~\cite{khalil2002nonlinear}. The hierarchy also provides fault containment: mass estimation errors (Layer~3) affect only feedforward in Layer~1 without corrupting attitude tracking; ESO transients are filtered before reaching attitude; CBF interventions are bounded by the tilt constraint (Section~\ref{sec:safety}). This modularity supports independent layer verification~\cite{koopman2017autonomous}.

\section{Decentralized Adaptive Estimation}\label{sec:estimation}
\begin{figure}[t!]
  \centering
  \includegraphics[width=0.84\columnwidth]{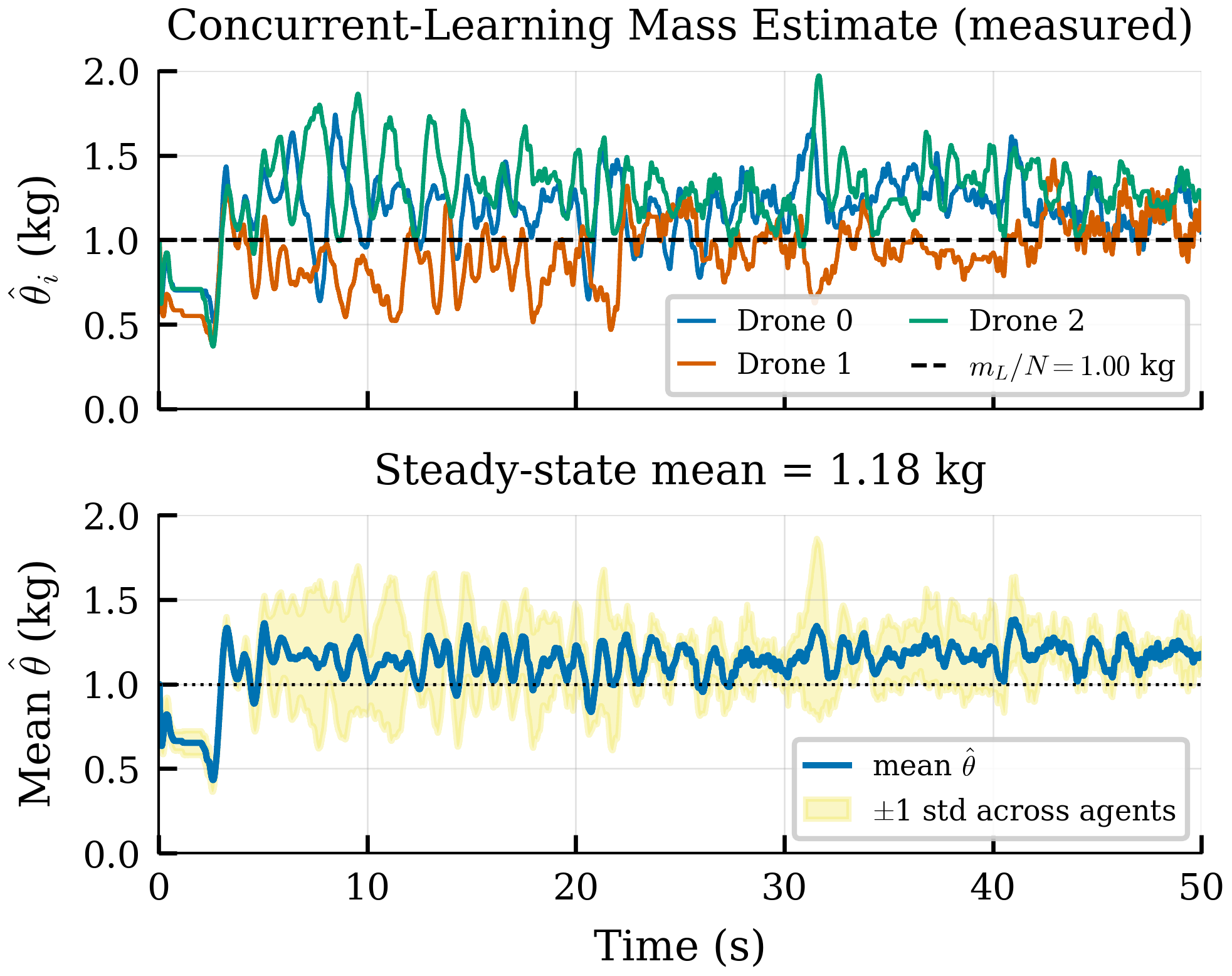}
  \caption{Adaptive mass estimation. \textit{Top:} Measured per-quadrotor concurrent-learning estimates $\hat{\theta}_i$ about the true share $m_L/N = 1.0$\,kg. \textit{Bottom:} Mean across agents with $\pm 1$ standard-deviation band; the buffer-driven update holds the estimate near $m_L/N$ (steady-state mean $1.18$\,kg) without persistent excitation.}
  \label{fig:mass_convergence}
  \vspace{-5pt}
\end{figure}

Each quadrotor runs three estimators in a downward cascade---no payload, cable, or adaptive-parameter states are exchanged: (i)~ESKF for navigation, (ii)~geometric load-state filter, and (iii)~concurrent learning mass estimator. Timescale separation---navigation at 200\,Hz versus load/mass estimation at 50\,Hz---allows each layer to treat inputs as quasi-static, with upstream errors propagating unidirectionally, simplifying stability analysis.

A 15-state ESKF~\cite{sola2017quaternion} with $\delta x = [\delta p,\,\delta v,\,\delta\theta,\,\delta b_a,\,\delta b_g]^\top \in \R^{15}$ fuses IMU (200\,Hz) and GPS (10\,Hz). Propagation uses bias-corrected specific force $a_W = R(\bar{q})(\tilde{a} - b_a) - ge_3$ and angular velocity $\omega_c = \tilde{\omega} - b_g$. Covariance propagation uses the error-state Jacobian: $P \leftarrow \Phi P\Phi^\top + Q_d$. The ESKF provides $(\hat{p}_i, \hat{v}_i, \hat{R}_i)$ to all downstream layers; control closes through the estimator, not ground truth.

When cable~$i$ is taut, payload position is estimated geometrically: $z_{p_i} = \hat{p}_i - L_i\,n_i$, where $n_i \in \Sph^2$ points from quadrotor to payload and $L_i$ is cable rest length. Each quadrotor maintains a Kalman filter with state $[\hat{p}_{L,i}^\top, \hat{v}_{L,i}^\top]^\top \in \R^6$. The prediction uses a constant-velocity model with damping ($\beta_v = 0.05$), biasing the payload velocity toward the quadrotor velocity. This value follows from the quasi-static timescale: $\beta_v = \Delta t_{\text{est}}/\tau_{\text{qs}}$ where $\Delta t_{\text{est}} = 0.02$\,s (50\,Hz) and $\tau_{\text{qs}} = 0.4$\,s is the payload pendular half-period $\approx\!\pi\sqrt{L/g}$. Halving or doubling $\beta_v$ changes load RMSE by $<$5\% in Monte Carlo tests, confirming low sensitivity. Measurement noise is modulated by tension confidence:
\begin{equation}
  \xi_T = \min\!\left(1,\,T_i/T_{\text{conf}}\right), \quad R_k \leftarrow R_k / (0.1 + 0.9\,\xi_T),
  \label{eq:tconf}
\end{equation}
with $T_{\text{conf}} = 20$\,N. When the cable is slack, measurement variance inflates, and the filter relies on prediction. An outlier gate ($\kappa_\nu = 3.0$) prevents cable whip from corrupting estimates. Under Gaussian innovation, the $3\sigma$ gate corresponds to a $\chi^2(3)$ threshold with false-rejection probability $P(\chi^2(3) > 9) = 0.029$; in practice, cable whip produces innovations exceeding $10\sigma$, so the gate reliably rejects corrupted measurements while passing $>$97\% of valid updates.

With a single cable, payload position is observable only along the cable direction; the tangential component relies on a constant-velocity prediction and drifts. The resulting per-agent load estimate is therefore poor, but it is \emph{not} in the tracking loop---it feeds only the mass estimator (Layer~3), which is tracking-neutral (Section~\ref{sec:results})---so its error does not propagate to payload tracking. Fusing all $N$ cables would recover the tangential direction but requires inter-agent communication; this distributed fusion is left to future work.

From the per-cable vertical force equilibrium, each quadrotor constructs a scalar parametric model:
\begin{equation}
  \underbrace{T_i \cos\zeta_i}_{\varphi_i} = \underbrace{\norm{g\,e_3 + \hat{a}_L}}_{Y_i} \cdot \underbrace{\frac{m_L}{N}}_{\theta} + \varepsilon_i,
  \label{eq:regressor}
\end{equation}
where $\zeta_i = \arccos(-n_{i,z})$ is the cable angle from vertical, $\hat{a}_L$ is payload acceleration (via numerical differentiation with low-pass filter, $\tau_f = 0.1$\,s), and $\varepsilon_i$ captures the modeling error. Its dominant component is the cable self-weight: the load cell at the quadrotor end supports the entire cable below it, so $T_i\cos\zeta_i$ also carries the cable weight in addition to the payload share. Unequal cable lengths also redistribute load across agents; this contributes to $\varepsilon_i$ and, ultimately, to the bounded error. Although $\hat{a}_L$ is differentiated from the poor single-cable load estimate, it enters only through $Y_i = \norm{g\,e_3 + \hat{a}_L} \approx g$, which is dominated by gravity, so that the error is masked and does not corrupt the identification.

\begin{remark}[Bias sources in $\hat{\theta}$]\label{rem:eps_bound}
The dominant term is \textbf{cable self-weight}. Because the load cell supports the whole cable below it, $T_i\cos\zeta_i$ includes the cable weight $m_{\text{rope}}g$ ($m_{\text{rope}}=0.2$\,kg per cable) on top of the payload share. Since gravity is vertical, the full cable weight enters the vertical balance $T_i\cos\zeta_i = (m_L/N + m_{\text{rope}})g$ with no $\cos$ factor, giving a payload-independent offset $\varepsilon_i^{\text{cable}} \approx m_{\text{rope}}g \approx 1.96$\,N, i.e.\ a bias $|\tilde{\theta}^{\text{cable}}| \approx m_{\text{rope}} \approx 0.20$\,kg. The observed $\approx +0.18$\,kg at the nominal payload (Table~\ref{tab:massshare}) sits just below this because the bead chain is not perfectly vertical and carries its own dynamics, so $\sim$90\% of the cable weight registers at the top load cell. Two smaller residuals remain. \textbf{Cable asymmetry} ($L_i = \bar{L}(1+\delta_i)$) redistributes load across agents but, by vertical equilibrium, leaves the cross-agent sum $\sum_i T_i\cos\zeta_i$ fixed; it therefore affects the per-agent \emph{spread}, not the mean, and is bounded by $|\varepsilon_i^{\text{asym}}| \leq (m_Lg/N)|\delta_i|\tan\bar{\zeta} \leq 1.28$\,N for $\delta_{\max}=0.19$. \textbf{Acceleration-estimation error} through the low-pass filter ($\tau_f=0.1$\,s) contributes $|\varepsilon_i^{\text{dyn}}| \leq O(\tau_f\norm{\dddot{p}_L}) \leq 0.4$\,N. The UUB radius $|\tilde{\theta}| \leq \bar{\varepsilon}/\bar{Y}$, with $\bar{Y}=\norm{g\,e_3+\hat{a}_L}\approx g$, is dominated by the cable-weight offset; as the payload grows the cables run more vertical and the residual fraction shrinks, so the net per-agent bias falls from $+0.18$ to $+0.02$\,kg across $m_L\in\{3,6,9\}$\,kg (Table~\ref{tab:massshare}).
\end{remark}

The regressor~\eqref{eq:regressor} formalizes the implicit coordination stated in~\eqref{eq:force_convergence}. With feedforward $F_{\text{ff},i} = \hat{\theta}_i \cdot u$ where $u := g\,e_3 + \ddot{p}_L^d$, each estimate converges to the effective per-agent share including its own cable, $\hat{\theta}_i \to m_L/N + m_{\text{rope}}$, so the total feedforward $\textstyle\sum_{i=1}^{N} F_{\text{ff},i}$ converges to $(m_L + Nm_{\text{rope}})\,u$, the total weight the team must support. Including the cable mass in the identified load is in fact correct for feedforward, since the quadrotors lift the cables as well as the payload.
No agent requires knowledge of $N$ or $m_L$; each estimates its share from local cable measurements, and summation yields correct collective compensation. This holds because the regressor $Y_i$ and control $u$ derive from the same shared trajectory, ensuring scalar parametric consistency across agents.

\begin{remark}[Trajectory synchronization]\label{rem:sync}
Since the trajectory is a polynomial stored locally and evaluated analytically, agents compute $u$ identically up to floating-point precision ($\sim\!10^{-15}$ relative error). Any clock offset $\Delta t$ introduces feedforward error bounded by $\norm{\partial u/\partial t}\Delta t$.
\end{remark}

The concurrent-learning update augments gradient descent with stored data:
$\dot{\hat{\theta}}_i = -\gamma\,Y_i\,s_{\text{proj}} - \gamma\rho \sum_{j=1}^{M_i} Y_j(Y_j\hat{\theta}_i - \varphi_j)$, where $\gamma = 0.5$, $\rho = 0.5$, and $s_{\text{proj}} = s_i^\top n_i$ projects sliding variable $s_i = \dot{e}_{L,i} + \lambda e_{L,i}$ ($\lambda = 1.0$) onto cable direction~\cite{chowdhary2010concurrent}. The first term drives online gradient descent; the second replays stored pairs $\{(Y_j, \varphi_j)\}_{j=1}^{M_i}$. The history buffer holds up to $\bar{M} = 50$ points, admitting samples only when excitation exceeds $Y_{\min} = 0.5$\,m/s$^2$ and the regressor differs from the buffer mean by $\delta_Y = 0.1$, ensuring diversity. Estimates are projected to $[\theta_{\min}, \theta_{\max}] = [0.1, 50.0]$\,kg.

\begin{proposition}[Finite-excitation convergence]\label{prop:cl}
If the history stack contains at least one informative sample, $\Sigma_Y = \frac{1}{M}\sum_j Y_j^2 > 0$, the parameter error decays exponentially at rate $\gamma\rho\sum_j Y_j^2 = \gamma\rho M\Sigma_Y$: $\abs{\tilde{\theta}(t)} \leq \abs{\tilde{\theta}(0)}\exp\!\bigl(-\gamma\rho(\textstyle\sum_j Y_j^2)\,t\bigr)$, without persistent excitation of the online trajectory. With bounded modeling error $|\varepsilon_i| \leq \bar{\varepsilon}$, convergence is uniformly ultimately bounded: $|\tilde{\theta}| \leq \bar{\varepsilon}/\bar{Y}$, where $\bar{Y} = \sqrt{\Sigma_Y}$ is the RMS regressor magnitude (consistent with Remark~\ref{rem:eps_bound}).
\end{proposition}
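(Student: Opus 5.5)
Define the parameter error $\tilde{\theta}_i := \hat{\theta}_i - \theta$ with $\theta = m_L/N$. The plan is a scalar Lyapunov argument on $V_\theta = \tfrac{1}{2}\tilde{\theta}_i^2$, with the history stack held fixed between admissions. First, substitute the regressor model~\eqref{eq:regressor}, $\varphi_j = Y_j\theta + \varepsilon_j$, into the replay term of the concurrent-learning law:
\begin{equation*}
  \sum_j Y_j(Y_j\hat{\theta}_i - \varphi_j) = \Bigl(\sum_j Y_j^2\Bigr)\tilde{\theta}_i - \sum_j Y_j\varepsilon_j .
\end{equation*}
Since $\theta$ is constant, $\dot{\tilde{\theta}}_i = \dot{\hat{\theta}}_i$. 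Hence
\begin{equation*}
  \dot{\tilde{\theta}}_i = -\gamma\rho M\Sigma_Y\,\tilde{\theta}_i - \gamma Y_i s_{\text{proj}} + \gamma\rho\sum_j Y_j\varepsilon_j .
\end{equation*}
The first term is the stable linear part. Its rate $\gamma\rho\sum_j Y_j^2$ is strictly positive as soon as one admitted sample exists, and every admitted sample has $|Y_j|\ge Y_{\min}>0$. This is exactly the finite-excitation condition. It does not depend on the online trajectory being persistently exciting, because the stack is frozen data rather than a time-varying integrand.

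Second, the nominal case. With $\varepsilon\equiv 0$, I would argue that the online gradient term vanishes at the tracking-consistent equilibrium, or treat it as an additive input that is absent in the idealized setting. Comparison on $\dot V_\theta \le -2\gamma\rho M\Sigma_Y V_\theta$ then gives the stated exponential bound for $|\tilde{\theta}(t)|$.

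Two technical points need handling.
\begin{itemize}
  \item \textbf{Projection onto $[\theta_{\min},\theta_{\max}]$.} By the standard property of projection, clipping never increases $|\tilde{\theta}|$ when $\theta$ lies in the interval, so the bound survives.
  \item \textbf{Buffer admissions.} Each admission only adds a nonnegative term to $\sum_j Y_j^2$, so the decay rate is nondecreasing in time. I would note that a replacement policy must not reduce the sum below its initial value, or else state the rate piecewise.
\end{itemize}

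Third, the UUB claim. With $|\varepsilon_j|\le\bar{\varepsilon}$, Cauchy--Schwarz gives
\begin{equation*}
  \Bigl|\sum_j Y_j\varepsilon_j\Bigr| \le \bar{\varepsilon}\sum_j |Y_j| \le \bar{\varepsilon}\sqrt{M}\Bigl(\sum_j Y_j^2\Bigr)^{1/2} = \bar{\varepsilon}\,M\sqrt{\Sigma_Y}.
\end{equation*}
Then $\dot V_\theta<0$ whenever $|\tilde{\theta}_i| > \bar{\varepsilon}M\sqrt{\Sigma_Y}/(M\Sigma_Y) = \bar{\varepsilon}/\bar{Y}$. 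So the set $\{|\tilde{\theta}|\le\bar{\varepsilon}/\bar{Y}\}$ is attractive and forward invariant, reached exponentially fast. This matches the radius in Remark~\ref{rem:eps_bound}.

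The main obstacle is the online term $-\gamma Y_i s_{\text{proj}}$. It couples the estimator to the tracking error through $s_i$ and is not a function of $\tilde{\theta}$ alone. I would first absorb it as a bounded perturbation, using the timescale separation of Section~\ref{sec:control} (mass adaptation slowest) to assume $s_{\text{proj}}$ is ultimately bounded by the Layer~1 loop. This adds a term $\gamma|Y_i|\bar{s}/(\gamma\rho M\Sigma_Y)$ to the ultimate radius. The stated clean radius then holds in the regime where $\rho M\Sigma_Y$ dominates, which is the large-buffer case. If a sharper result is wanted, the fallback is a composite Lyapunov function $V_\theta + V_{\text{track}}$ in which the cross term $\tilde{\theta}_i Y_i s_{\text{proj}}$ cancels, as in the standard certainty-equivalence design. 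Taking that route, I would state the proposition for the replay-dominated dynamics.
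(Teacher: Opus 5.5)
Your proposal follows the paper's proof: the same $V_\theta=\tfrac12\tilde{\theta}^2$, the same substitution of $\varphi_j=Y_j\theta+\varepsilon_j$ into the replay term, and the same reduction to the replay-dominated dynamics. The paper drops the online term by appealing to projection, which you more cautiously treat as an additive perturbation. Your Cauchy--Schwarz step $\bigl|\sum_j Y_j\varepsilon_j\bigr|\le\bar{\varepsilon}\sqrt{M}\bigl(\sum_j Y_j^2\bigr)^{1/2}$ is a small but real improvement: it gives the radius $\bar{\varepsilon}/\bar{Y}$ exactly, where the paper only obtains it under the approximation $Y_j\approx\bar{Y}$.
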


\begin{proof}
Neglecting the online term (rendered nonpositive by projection onto $[\theta_{\min},\theta_{\max}]$~\cite{chowdhary2013exponentially}) and substituting $\varphi_j = Y_j\theta + \varepsilon_j$, the replay term gives $\dot{\tilde{\theta}} = -\gamma\rho(\textstyle\sum_j Y_j^2)\,\tilde{\theta} + \gamma\rho\sum_j Y_j\varepsilon_j$. With $V_\theta = \tfrac12\tilde{\theta}^2$, $\dot{V}_\theta = -\gamma\rho(\sum_j Y_j^2)\tilde{\theta}^2 + \gamma\rho\,\tilde{\theta}\sum_j Y_j\varepsilon_j$. In the noise-free case Gr\"{o}nwall's inequality yields exponential convergence at rate $\gamma\rho\sum_j Y_j^2$; bounded $\varepsilon$ leaves a residual set of radius $|\tilde{\theta}| \leq |\sum_j Y_j\varepsilon_j|/\sum_j Y_j^2 \leq \bar{\varepsilon}/\bar{Y}$ (for $Y_j \approx \bar{Y}$). Empirically, taut-cable samples accumulated during flight hold the estimate within $\approx\pm0.15$\,kg of the effective share (Section~\ref{sec:results}).
\end{proof}

\begin{remark}[Closed-loop estimation--control interaction]\label{rem:closed_loop}
Proposition~\ref{prop:cl} treats the regressor as exogenous. In closed loop, $\hat{\theta}_i$ enters the feedforward $F_{\text{ff},i} = \hat{\theta}_i u$ and so shapes the tracking error $e_{L,i}$ and, in turn, the cable measurements feeding the estimator. A representative gain chain at the steady-state offset $\tilde{\theta}\approx 0.19$\,kg ($\norm{u}\leq 11.8$\,m/s$^2$, $k_{p,\min}=6$, $L\approx 1$\,m): a tracking offset $\lesssim \tilde{\theta}\norm{u}/k_{p,\min} \approx 0.37$\,m, a cable-angle offset $\approx 0.37$\,rad, and a regressor perturbation $\approx 0.37\,T_i \approx 4.5$\,N---which \emph{exceeds} the modeling residual $\bar{\varepsilon}\approx 2.0$\,N. We therefore explicitly do not claim the closed-loop coupling is absorbed within $\bar{\varepsilon}$. (This conservatively counts the cable-weight offset as fed-back error; since the estimator converges to the \emph{effective} share, that offset is in fact compensated, and the residual coupling is smaller.) A rigorous certificate (a contraction condition $\gamma_\theta\gamma_p < 1$ on the estimation--tracking interconnection) is left to future work; empirically, the loop is stable across all tested seeds, team sizes, and payloads (Tables~\ref{tab:massshare},~\ref{tab:nscale}).
\end{remark}

\section{CBF Safety Filter}\label{sec:safety}
\begin{figure}[t!]
  \centering
  \includegraphics[width=0.84\columnwidth]{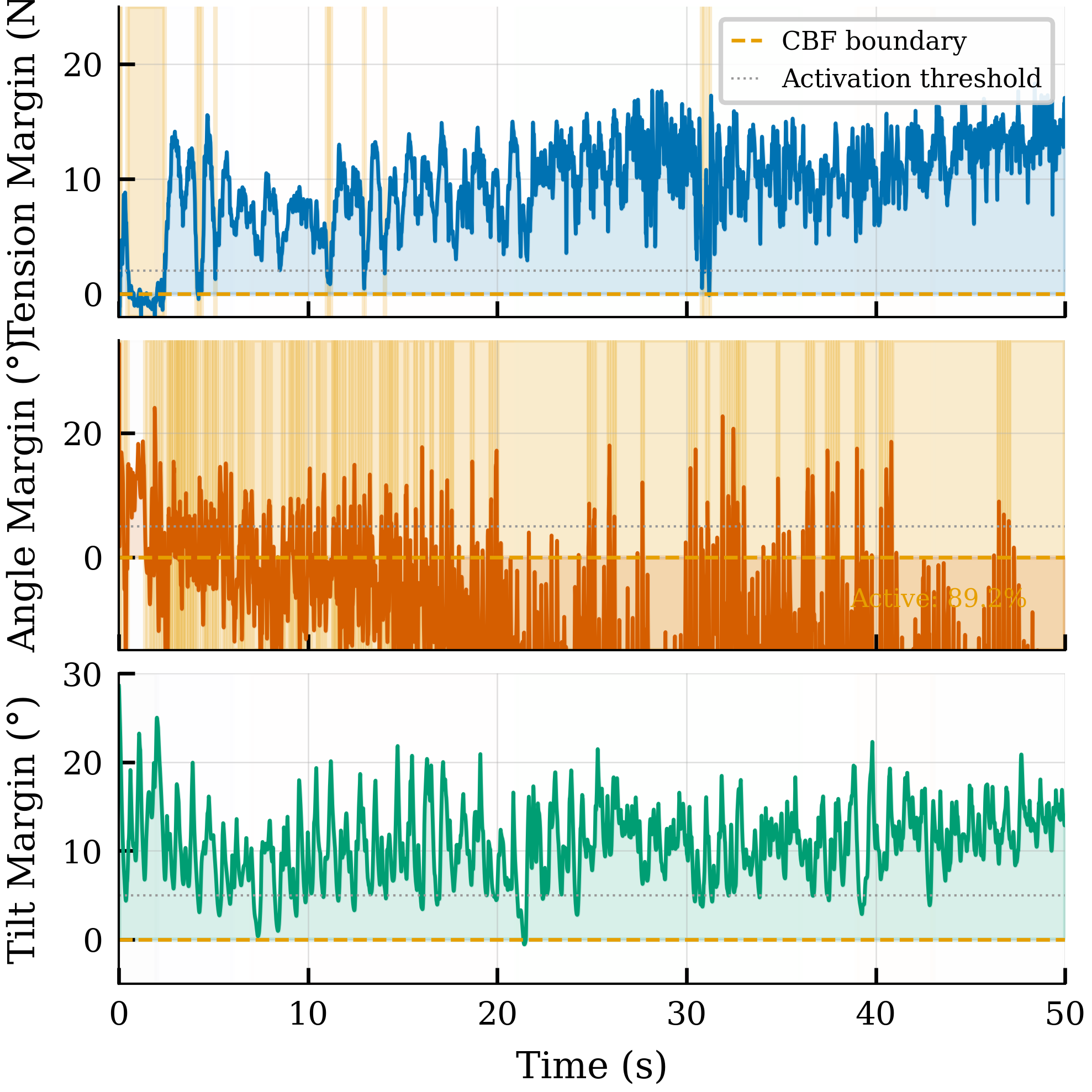}
  \caption{\textit{Top:} Tension margin (distance to $T_{\min}$) for all cables. \textit{Middle:} Cable angle margin (distance to $\zeta_{\max} = 34.4^\circ$); shaded regions indicate CBF activation during aggressive cornering. \textit{Bottom:} Tilt margin, held at the limit. Orange regions denote constraint activation.}
  \label{fig:cbf_activation}
\end{figure}

\begin{figure}[t!]
  \centering
  \includegraphics[width=0.84\columnwidth]{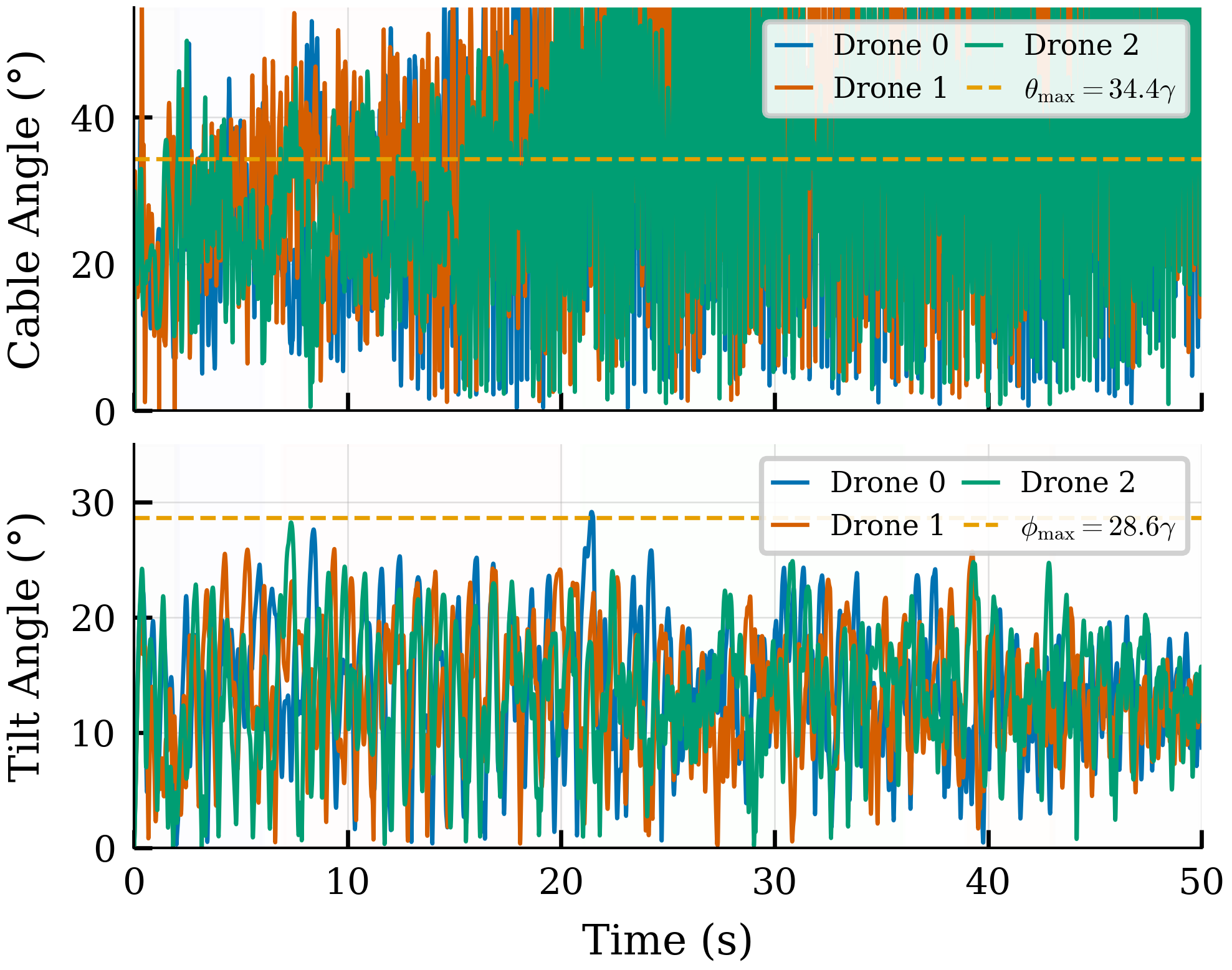}
  \caption{\textit{Top:} Cable angle from vertical for each cable, with the CBF limit $\zeta_{\max} = 34.4^\circ$; this relative-degree-two output exceeds the limit during aggressive cornering. \textit{Bottom:} Quadrotor tilt angle with limit $\phi_{\max} = 28.6^\circ$; the filter holds it at the limit.}
  \label{fig:safety_constraints}
  \vspace{-5pt}
\end{figure}

A modular, priority-ordered CBF-inspired layer supervises operational constraints via minimal post-processing, modifying GPAC output only when necessary and reducing constraint violations rather than strictly enforcing them. Each barrier function corresponds to a transport hazard from Section~\ref{sec:intro}, providing runtime safety supervision within the designed safety envelopes.

The safety filter operates on the force $f_i \in \R^3$ from Layer~1. Abstracting translational dynamics~\eqref{eq:quad_trans} as $m_Q\ddot{p}_i = f_i + w_i(t)$ (affine in control, with lumped disturbance $w_i$ from ESO), five barrier functions encode the safe set $\mathcal{C}_k = \{x \mid h_k(x) \geq 0\}$:
\begin{align}
  h_T^{\text{low}} &= T_i - T_{\min},\;\; h_T^{\text{up}} = T_{\max} - T_i & \text{(tautness)}, \label{eq:h_taut}\\
  h_\zeta &= \cos\zeta_i - \cos\zeta_{\max} & \text{(cable angle)}, \label{eq:h_angle}\\
  h_\omega &= \omega_{\max}^2 - \norm{\omega_{q_i}}^2 & \text{(swing rate)}, \label{eq:h_swing}\\
  h_{\text{tilt}} &= \cos\phi_i - \cos\phi_{\max} & \text{(vehicle tilt)}, \label{eq:h_tilt}\\
  h_{\text{col}} &= \norm{p_i - p_j}^2 - d_{\min}^2 & \text{(collision)}, \label{eq:h_col}
\end{align}
with $T_{\min}\!=\!2$\,N, $T_{\max}\!=\!60$\,N, $\zeta_{\max}\!=\!0.6$\,rad ($34.4^\circ$), $\omega_{\max}\!=\!1.5$\,rad/s, $\phi_{\max}\!=\!0.5$\,rad ($28.6^\circ$), $d_{\min}\!=\!0.8$\,m. Cable angle is denoted $\zeta_i$ to distinguish it from the adaptive parameter $\theta = m_L/N$ (Section~\ref{sec:estimation}). The first two barriers maintain cable tension within $[T_{\min}, T_{\max}]$; $h_\zeta$ and $h_\omega$ limit cable angle and swing rate; $h_{\text{tilt}}$ bounds vehicle roll/pitch; and $h_{\text{col}}$ prevents inter-agent collision. Since $\cos(\cdot)$ is monotonically decreasing on $[0,\pi]$, the safe conditions $\zeta_i \leq \zeta_{\max}$ and $\phi_i \leq \phi_{\max}$ are equivalent to $h_\zeta \geq 0$ and $h_{\text{tilt}} \geq 0$, respectively.

For collision~\eqref{eq:h_col}, the second derivative involves both $f_i$ and $f_j$. In the decentralized setting, agent~$i$ treats neighbor acceleration as a bounded disturbance and enforces a conservative one-sided HOCBF using only its own force, with the disturbance absorbed into the ISSf margin.

Tautness barriers~\eqref{eq:h_taut} are relative-degree-one. Cable angle, tilt, and collision are relative-degree-two, handled via HOCBFs~\cite{xiao2022control, ames2019control}: defining $\psi_1 = \dot{h} + \alpha_1 h$, the constraint $\dot{\psi}_1 + \alpha_2\psi_1 \geq 0$ becomes relative-degree-one in control, with the safe set $\{\psi_1 \geq 0\} \cap \{h \geq 0\}$ forward-invariant. These relative degrees are computed on the force-affine surrogate $m_Q\ddot{p}_i = f_i + w_i$; in the bead-chain plant, tension and cable angle additionally depend on cable stretch, bead dynamics, and slack-to-taut transitions, so the barriers are designed on the surrogate and the high-fidelity simulation evaluates robustness to this model mismatch.

\subsection{Priority-Ordered Safety Projection with Conditional ISSf Margins}

The idealized filter is the QP solved at each control cycle:
\begin{align}
  \label{eq:cbf_qp}
  f_{\text{safe}} = \argmin_{f,\,\delta}\;\norm{f - f_{\text{nom}}}^2 + \lambda\!\sum_j\!\delta_j^2 \\
  \;\;\text{s.t.}\;\; B_j(x,f) \geq -\mu_j - \delta_j,\nonumber
\end{align}
where $B_j$ is the control-affine constraint function for barrier $j$: $\dot{h}_j + \alpha_j h_j$ for the relative-degree-one tautness barriers, and the HOCBF form $\dot{\psi}_{1,j} + \alpha_{2,j}\psi_{1,j}$ (with $\psi_{1,j}=\dot{h}_j+\alpha_{1,j}h_j$) for the relative-degree-two angle, tilt, and collision barriers. With $\lambda = 100$ penalizing relaxation, and since hard forward invariance of $\{h \geq 0\}$ is unrealistic under disturbances, we use Input-to-State Safety (ISSf), coupling the robustness margin to the ESO estimate; i.e., $B_j \geq -\mu_{\text{base}} - \kappa_d\norm{\hat{d}_i}$,
with $\mu_{\text{base}} = 2.0$, $\kappa_d = 1.5$. When the ESO reports large disturbances, the margin grows, and the filter activates earlier. For a relative-degree-one barrier the steady-state violation bound is $h_j(t) \geq -(\mu_{\text{base}} + \kappa_d\bar{d})/\alpha_j$

The implementation in this paper uses sequential gradient projection with priority ordering (tautness> angle> tilt> swing> collision) rather than a full QP. This approximation equals~\eqref{eq:cbf_qp} when at most one constraint is active---typical, as the filter is transparent in nominal flight and engages only near a constraint boundary during aggressive cornering. The formal ISSf bound on $h_j(t)$ therefore holds under single-constraint activation; under simultaneous multi-constraint activation, only priority-ordered feasibility is ensured. The priority ordering reflects failure severity: tautness ranks highest because cable slack causes immediate loss of controllability, while the inter-agent clearance is held at $\geq 0.67$\,m against the $d_{\min} = 0.8$\,m bound. Slack variables $\delta_j$ ensure lower-priority constraints degrade gracefully. Per-agent cost is $O(N_c)$ with $N_c = 6$ constraints ($\sim$1200\,FLOPs/cycle).

The safety filter modifies force direction, changing the desired rotation $R_d$. A critical requirement is that this perturbation not exit the geometric controller's stability region.

\begin{theorem}[Safety--attitude compatibility]\label{thm:compatibility}
While the tilt barrier~\eqref{eq:h_tilt} is active it constrains the safe thrust direction to the cone $\phi_i \leq \phi_{\max}=0.5$\,rad, so the safe desired attitude lies within $\phi_{\max}$ of the hover attitude:
\begin{equation}
  \Psi_R(I, R_d^{\text{safe}}) = 1 - \cos\phi_i \leq 1 - \cos\phi_{\max} \approx 0.12,
  \label{eq:compat}
\end{equation}
($\leq 0.15$ even at the worst observed tilt of $32^\circ$). The CBF-modified attitude reference, therefore, stays deep within the hover neighborhood---far from the antipodal configuration ($\Psi_R = 2$) that bounds the geometric controller's almost-global basin (Proposition~\ref{prop:lyap})---so the safety filter cannot drive the desired attitude toward the controller's stability boundary.
\end{theorem}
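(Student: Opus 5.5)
The plan is to compute $\Psi_R(I,R_d^{\text{safe}})$ directly from the construction \eqref{eq:Rd_extract} of the desired attitude, and then compare the value with the basin of Proposition~\ref{prop:lyap}.

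\textbf{Step 1: the safe thrust direction lies in the tilt cone.} When the tilt barrier \eqref{eq:h_tilt} is active, the projection enforces $h_{\text{tilt}}\geq 0$. By monotonicity of $\cos$ on $[0,\pi]$, this means $b_{3c}^{\text{safe}}\cdot e_3=\cos\phi_i\geq\cos\phi_{\max}$, where $b_{3c}^{\text{safe}}=f_{\text{safe}}/\norm{f_{\text{safe}}}$ and $f_{\text{safe}}\neq 0$ thanks to the gravity feedforward. I would state explicitly that this is a claim about the \emph{reference}. Because of the ISSf relaxation, the guarantee is either exact (single-constraint activation) or, under violation, replaced by the observed worst-case tilt.

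\textbf{Step 2: compute $\Psi_R$ between identity and $R_d^{\text{safe}}$.} By \eqref{eq:Rd_extract}, $R_d^{\text{safe}}=[b_{1c},b_{2c},b_{3c}]$ with $b_{1c},b_{2c}$ obtained from $b_{1d}=[\cos\psi_d,\sin\psi_d,0]^\top$. Take the hover reference to be the heading-aligned level attitude $R_\psi=\exp(\psi_d\hat{e}_3)$; the statement's $I$ corresponds to $\psi_d=0$, or to measuring relative to the yaw frame. Then $R_\psi^\top R_d^{\text{safe}}$ is the minimal rotation taking $e_3$ to $R_\psi^\top b_{3c}$. Its axis is horizontal, because $b_{2c}$ is orthogonal to both $b_{3c}$ and $b_{1d}$, so the construction introduces no extra yaw twist. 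Its angle is $\phi_i$. The trace of a rotation by angle $\phi$ is $1+2\cos\phi$, so the paper's normalization of $\Psi_R$ gives $\Psi_R=1-\cos\phi_i$. Combining this with Step 1 yields $\Psi_R\leq 1-\cos 0.5\approx 0.122$, and the $32^\circ$ worst case gives $1-\cos(0.559)\approx 0.152$.

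\textbf{Step 3: compare with the basin.} Proposition~\ref{prop:lyap} requires only $\Psi_R<2$. The bound $0.12$, or $0.15$, is far below it. The attitude-error sublevel set $\{\Psi_R\leq\psi\}$ with $\psi<2$ is exactly where the Lyapunov bound \eqref{eq:VR_dot} applies. So the filter cannot move the reference toward the antipodal set.

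\textbf{Main obstacle.} The main obstacle is Step 2: showing that the yaw part introduced by the $b_{1d}$ construction does not add rotation angle beyond $\phi_i$. This is handled by measuring against the heading-aligned hover attitude. I would also note that the fallback branch (world $x$-axis projection) can likewise be realigned by a pure yaw. A second subtlety is the scope of the claim. It bounds the error between the reference and hover, not between the reference and the actual attitude $R_i$. I would remark that, together with the fast attitude loop (timescale separation), this keeps $\Psi_R(R_d,R_i)$ small, but the theorem as stated only needs the reference bound.
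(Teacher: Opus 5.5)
Your route (tilt cone, then $\Psi_R$ from the construction \eqref{eq:Rd_extract}, then comparison with $\Psi_R<2$) is the paper's. Measuring against the heading-aligned hover $R_\psi$ instead of $I$ is a genuine repair: at zero tilt with $\psi_d=\pi$, $\Psi_R(I,R_d^{\text{safe}})=2$.

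Step~2 fails as written, however. $R_\psi^\top R_d^{\text{safe}}$ is in general \emph{not} the minimal rotation taking $e_3$ to $R_\psi^\top b_{3c}$. The fact that $b_{2c}$ is orthogonal to $b_{3c}$ and $b_{1d}$ only says that $b_{1c}$ is the normalized projection of $b_{1d}$ onto the plane normal to $b_{3c}$. It does not make the rotation axis horizontal, and the construction generically adds a small twist about $b_{3c}$.

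To see this, let $c=\cos\phi_i$ and $n=\norm{b_{3c}\times b_{1d}}\in[c,1]$. Then $b_{1c}\cdot b_{1d}=n$, $b_{2c}\cdot(e_3\times b_{1d})=c/n$, and $b_{3c}\cdot e_3=c$, so $\tr(R_\psi^\top R_d^{\text{safe}})=c+n+c/n$. This equals $1+2c$ only when $n\in\{c,1\}$, i.e.\ when the tilt is along or perpendicular to the heading. For $\phi_i=0.5$ tilted $45^\circ$ off the heading, you get $\Psi_R\approx 0.1244$, not $1-\cos 0.5\approx 0.1224$.

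The paper's proof asserts the same ``exactly $\phi_i$'' without justification, so the equality in \eqref{eq:compat} is itself not true in general. What holds automatically is only $\Psi_R\geq 1-\cos\phi_i$, because a rotation's angle is at least the angle through which it moves any unit vector.

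The fix is to replace the equality by a two-sided bound, which the trace formula gives at once. The inequality $(1-n)(n-c)\geq 0$ yields $n+c/n\leq 1+c$, and AM--GM yields $n+c/n\geq 2\sqrt{c}$. Hence
\[
1-\cos\phi_i\;\leq\;\Psi_R(R_\psi,R_d^{\text{safe}})\;\leq\;\tfrac12\bigl(3-\cos\phi_i-2\sqrt{\cos\phi_i}\bigr),
\]
with the upper bound attained at $n=\sqrt{c}$. This gives $\approx 0.124$ at $\phi_{\max}=0.5$ and $\approx 0.155$ at the observed $32^\circ$, marginally above the quoted $0.15$. The conclusion of your Step~3 is therefore unaffected.

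Separately, because $b_{1d}$ is horizontal, $n\geq\cos\phi_i\geq\cos\phi_{\max}\approx 0.88$ inside the cone. So the fallback branch you worry about never fires for any small threshold and needs no separate argument.
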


\begin{proof}
The tilt barrier $h_{\text{tilt}} = \cos\phi_i - \cos\phi_{\max} \geq 0$ constrains the safe thrust direction $b_{3c} = f_{\text{safe}}/\norm{f_{\text{safe}}}$~\eqref{eq:Rd_extract} to $\phi_i \leq \phi_{\max}$. Since $R_d^{\text{safe}}$ aligns $b_{3c}$ with the body $z$-axis, its geodesic distance from the hover attitude $I$ (where $b_3 = e_3$) is exactly $\phi_i$, giving $\Psi_R(I, R_d^{\text{safe}}) = 1 - \cos\phi_i \leq 1 - \cos\phi_{\max}$. This bound holds whenever the tilt barrier is active---the regime in which the filter modifies the attitude reference.
\end{proof}

\begin{remark}[Actual attitude]\label{rem:actual_att}
Theorem~\ref{thm:compatibility} bounds the desired \emph{reference}, not the actual attitude $R_i$. The geometric controller (Proposition~\ref{prop:lyap}) drives $R_i \to R_d^{\text{safe}}$ exponentially from any error $\Psi_R(R_i, R_d^{\text{safe}}) < 2$; because the reference stays within $0.12$ of hover and the attitude loop is fast ($200$\,Hz, $\sim$5\,ms settling, Remark~\ref{rem:timescale}), the tracking error remains well inside the basin in simulation. We do not claim exponential tracking of the time-varying safe reference in the presence of the omitted angular-rate feedforward.
\end{remark}

\begin{remark}[Timescale separation]\label{rem:timescale}
Compatibility relies on three timescales: fast attitude ($k_R/J \approx 200$ rad/s, settling ~5 ms), medium safety filter (Butterworth cutoff $2\pi \times 15 \approx 94$ rad/s), and slow position/cable dynamics ($\sqrt{g/L} \approx 3$ rad/s). The attitude-to-CBF bandwidth ratio $200/94 \approx 2.1$ is moderate. Classical singular perturbation~\cite[Ch.~11]{khalil2002nonlinear} typically requires this ratio to be much greater than one. Instead, we rely on a practical composability argument. The attitude settling time (~5 ms) is less than the CBF Butterworth delay (~11 ms). This guarantees the attitude controller resolves any step change in $R_d^{\text{safe}}$ within one safety filter cycle. Butterworth rate-limiting ensures $\norm{\dot{R}_d^{\text{safe}}} \leq 2\pi f_c \cdot 2\phi_{\max}$. Thus, the $\Omega_{d_i} \approx 0$ simplification in~\eqref{eq:att_ctrl} remains valid. The stronger separation $\omega_{\text{att}}/\omega_{\text{pos}} \approx 200/3 \approx 67$ does satisfy formal singular perturbation, ensuring the position-attitude cascade is stable. The CBF operates within this well-separated hierarchy.
\end{remark}

\section{Simulation Results}\label{sec:results}
We validated GPAC using a Drake-based~\cite{drake2024} multibody simulation, with parameters listed in Table~\ref{tab:params}. The physics engine runs at 5000 Hz using a semi-implicit Euler method, and cable vibrations (around 55 Hz) are resolved with 90 times oversampling. The cable rest lengths are asymmetric ([0.994, 1.155, 0.952] m for the baseline seed), showing up to 19\% variation. The multi-rate timing is set to match the GPAC layer structure. The simulation environment includes Dryden wind and a full sensor suite. All control loops are closed using the ESKF rather than ground truth, ensuring sensor-in-the-loop realism. The principal controller, estimator, and CBF parameters are listed in Table~\ref{tab:params_ctrl}; the complete parameter set, the waypoint trajectory, and the per-seed randomization are released with the open-source implementation.\footnote{Anonymized for review; the repository and commit hash will be provided in the camera-ready version.} We report the $3$D payload-position RMSE against the piecewise-linear load reference over the post-pickup window $t\ge 2$\,s, as the mean $\pm$ standard deviation (and coefficient of variation, CV) over the Monte-Carlo seeds.

\begin{table}[t]
  \centering
  \caption{Principal GPAC controller, ESO, concurrent-learning (CL), and CBF parameters. The full set is in the open-source implementation.}
  \label{tab:params_ctrl}
  \footnotesize
  \setlength{\tabcolsep}{4pt}
  \begin{tabular}{@{}llll@{}}
    \toprule
    \textbf{Param.} & \textbf{Value} & \textbf{Param.} & \textbf{Value} \\
    \midrule
    $k_p^{xy},k_d^{xy}$ & $26,\,13$ & $k_p^{z},k_d^{z}$ & $24,\,12$ \\
    $k_i^{xy},k_i^{z}$  & $0.4,\,2.5$ & $\phi_{\max}$ & $0.5$\,rad \\
    $k_q,k_\omega$ (swing) & $2.0,\,3.0$ & $k_R,k_\Omega$ (att.) & $8.0,\,1.5$ \\
    $\omega_o,b_0$ (ESO) & $8$\,rad/s,\,$0.6$ & $\tau_{\mathrm{ESO}}$ & $0.1$\,s \\
    $\gamma,\rho$ (CL) & $0.5,\,0.5$ & buffer $\bar{M}$ & $50$ \\
    $\alpha_T,\alpha_\zeta$ (CBF) & $1.5,\,1.0$ & $\alpha_\phi,\alpha_c$ & $2.0,\,5.0$ \\
    $d_{\min}$ & $0.8$\,m & broadcast & $10$\,Hz \\
    \bottomrule
  \end{tabular}
  \vspace{-6pt}
\end{table}

The overall 3D RMSE is 33.6 cm for the baseline seed (4.8\% of the workspace diagonal and within one cable length); a 13-seed Monte Carlo gives a mean of 33.8 cm with a 2.8\% coefficient of variation, so performance is essentially seed-invariant. This is achieved with all control and estimation loops closed through the ESKF (no ground-truth feedback), under wind disturbance and 19\% cable asymmetry. The horizontal and vertical channels contribute comparably (22.3 and 25.1 cm): the payload trails and swings as the formation drags it through the figure-eight, while the vertical error is concentrated in the pickup transient before the altitude integrator settles. The largest excursions occur during the figure-eight cornering, where the payload pendulum is most excited.

\begin{figure}[t]
  \centering
  \includegraphics[width=0.84\columnwidth]{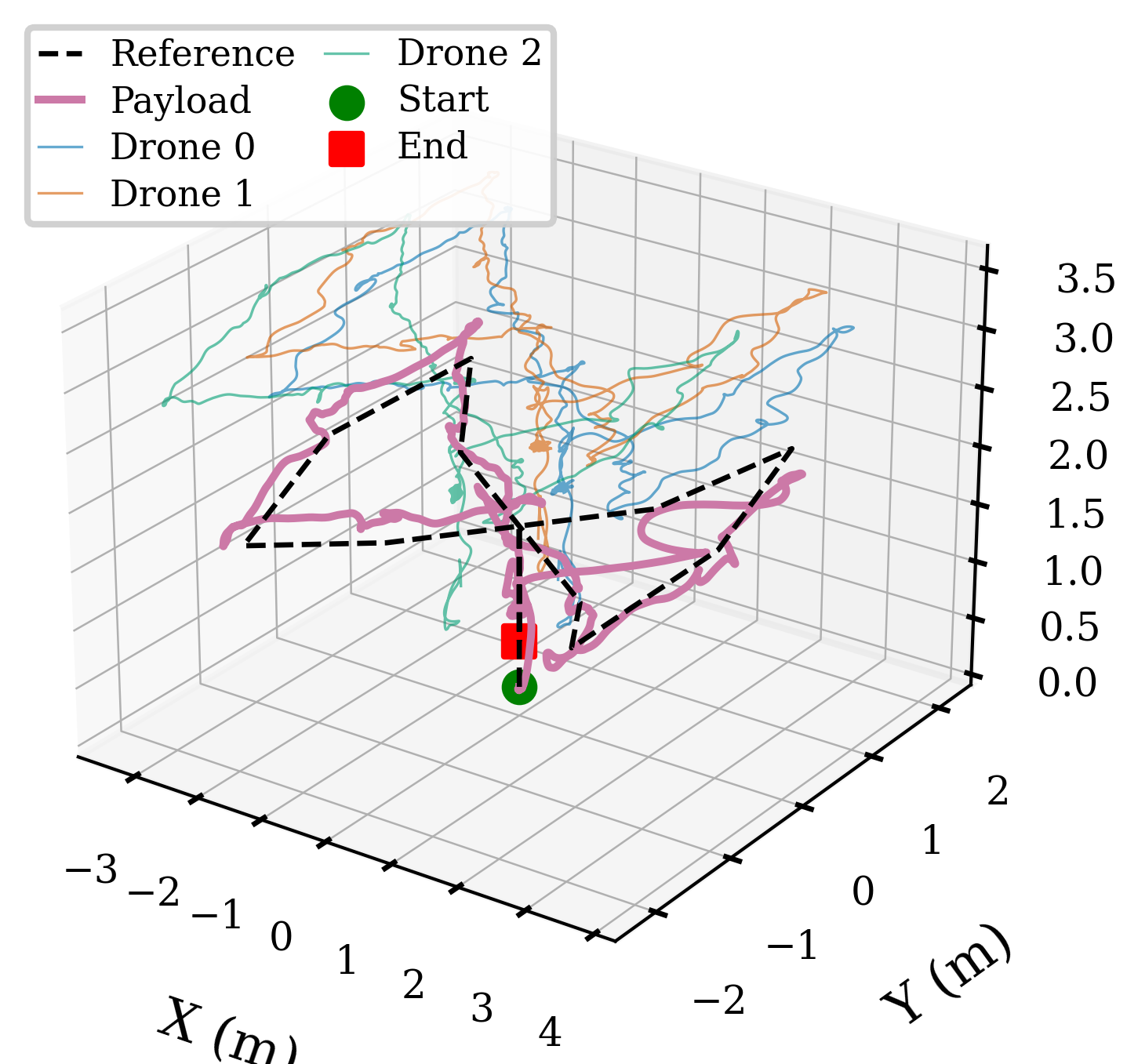}
  \caption{3D trajectory of the load and the quadcopters. The max velocity of the load is 1\,m/s.}
  \label{fig:trajectory_3d}
\end{figure}

\begin{table}[t]
  \centering
  \caption{Payload tracking errors (cm) for the baseline seed; the 13-seed Monte-Carlo mean is $33.8\pm0.9$ cm (2.8\% CV).}
  \label{tab:tracking}
  \begin{tabular}{@{}lcccccc@{}}
    \toprule
    & \multicolumn{2}{c}{\textbf{Horiz.}} & \multicolumn{2}{c}{\textbf{Vert.}} & \multicolumn{2}{c}{\textbf{3D}} \\
    \textbf{Phase} & RMSE & Max & RMSE & Max & RMSE & Max \\
    \midrule
    Ascent (2--6\,s) & 15.9 & 42.3 & 47.1 & 60.1 & 49.8 & 61.0 \\
    Fig-8 right (7--20\,s) & 26.6 & 65.9 & 16.8 & 41.2 & 31.5 & 73.2 \\
    Fig-8 left (24--36\,s) & 20.7 & 38.1 & 21.2 & 29.2 & 29.6 & 45.2 \\
    Descent (39--43\,s) & 14.2 & 30.2 & 24.2 & 33.1 & 28.1 & 38.3 \\
    Post-descent (43--50\,s) & 6.7 & 11.7 & 29.0 & 33.7 & 29.8 & 33.7 \\
    \midrule
    \textbf{Overall} & 22.3 & 72.2 & 25.1 & 60.1 & \textbf{33.6} & 74.5 \\
    \bottomrule
  \end{tabular}
\end{table}

\begin{figure}[t]
  \centering
  \includegraphics[width=0.84\columnwidth]{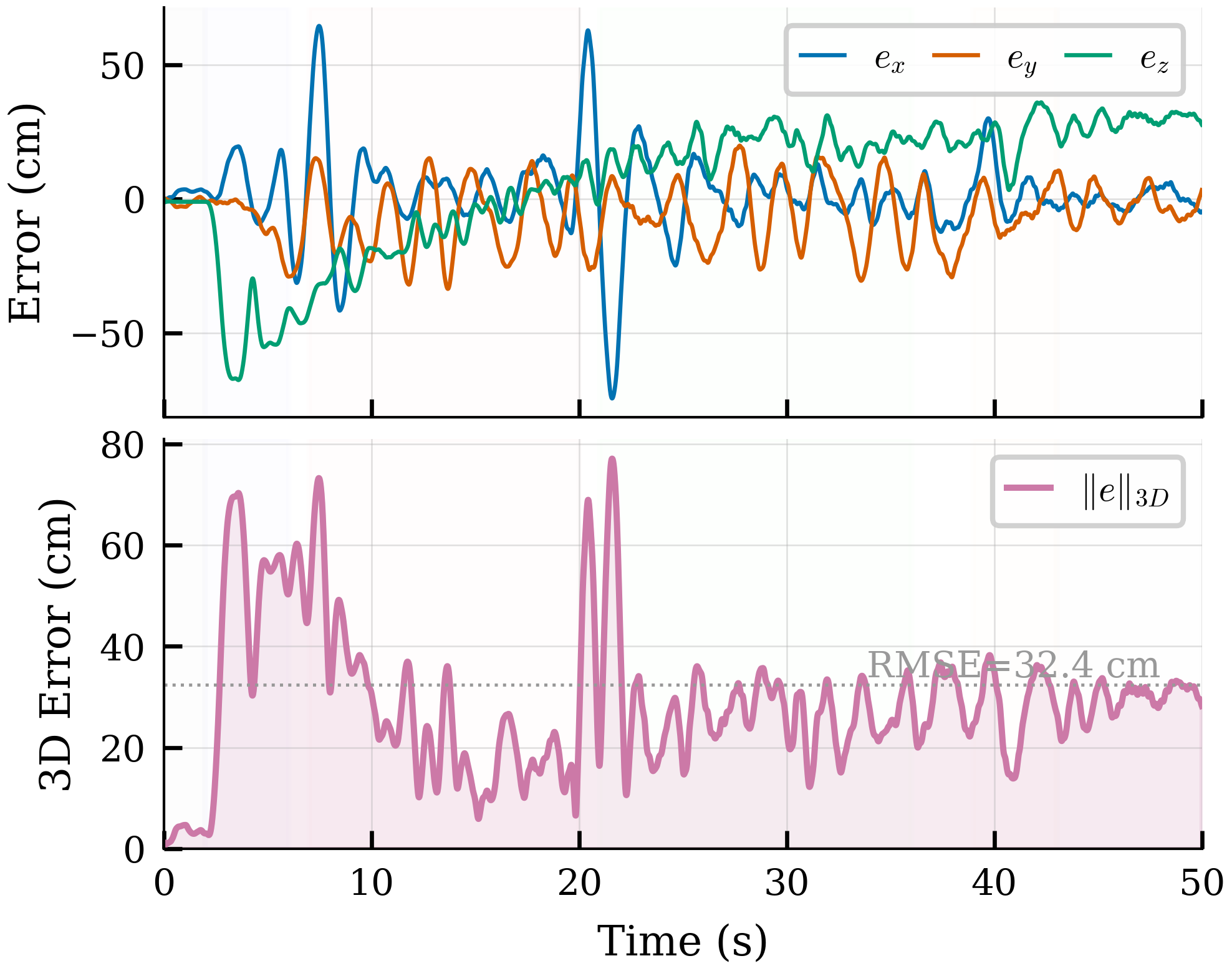}
  \caption{Payload tracking error. \textit{Top:} 3D error (RMSE 33.6\,cm); peaks during aggressive cornering. \textit{Bottom:} Components. The horizontal and vertical channels contribute comparably; the vertical error is concentrated in the pickup transient.}
  \label{fig:tracking_error}
  \vspace{-5pt}
\end{figure}

\begin{figure}[t]
  \centering
  \includegraphics[width=0.84\columnwidth]{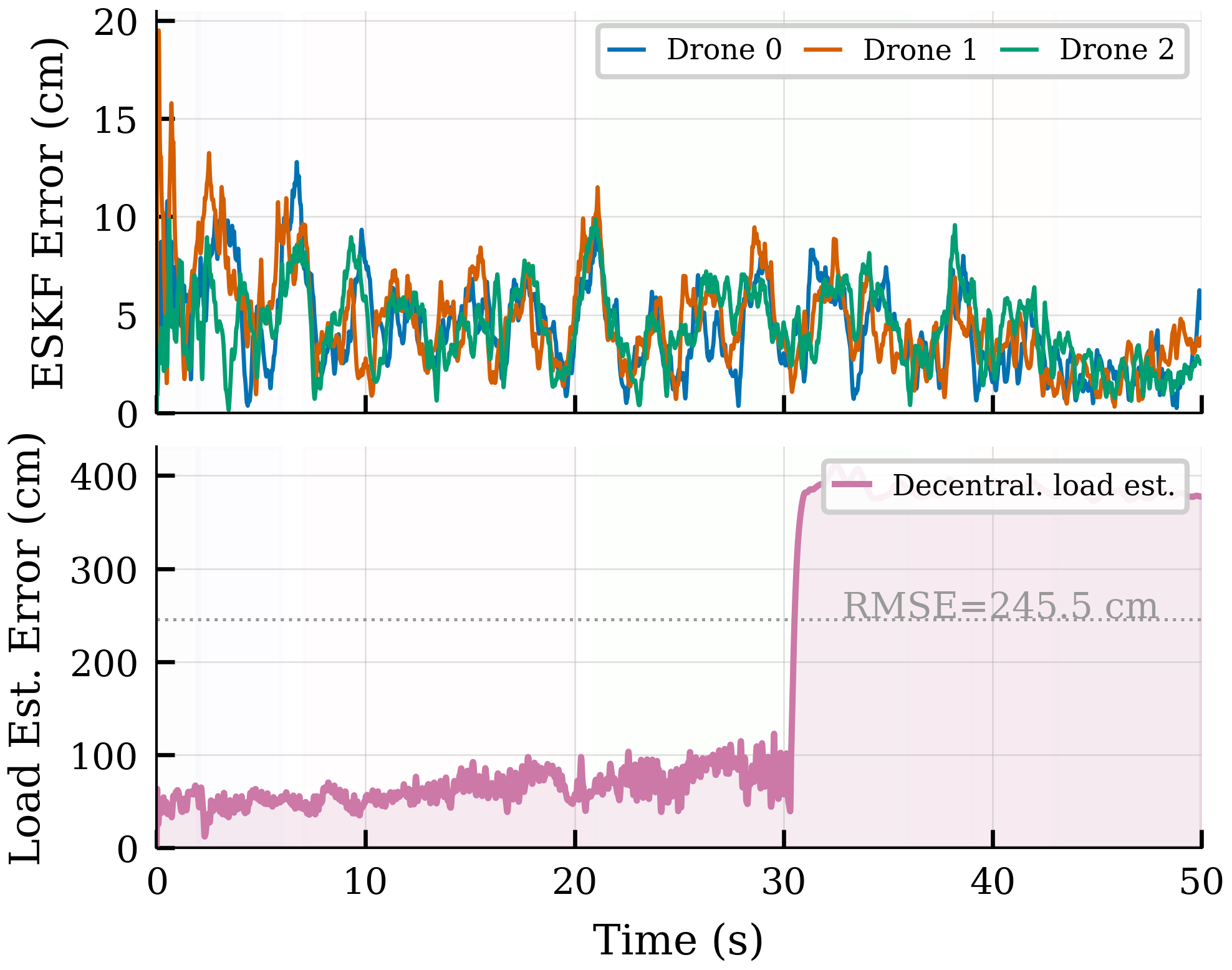}
  \caption{Estimation performance. \textit{Top:} ESKF error (5.0\,cm RMSE). \textit{Middle:} Decentralized load estimate, limited by single-cable observability. \textit{Bottom:} Per-quadrotor concurrent-learning mass estimates fluctuating about $m_L/N=1.0$\,kg (steady-state mean $1.18$\,kg).}
  \label{fig:estimation_error}
  \vspace{-5pt}
\end{figure}

Table~\ref{tab:tracking} breaks the baseline-seed error down by phase. The horizontal channel is tightest during the post-descent hover (6.7 cm) and largest through the figure-eight (26.6 cm) as the formation drags the payload through the corners. The vertical error is dominated by the ascent phase (47.1 cm), the slack-to-taut pickup transient before the altitude integrator cancels the load-induced deficit; in steady cruise, it settles to roughly 17--24 cm.

The ESKF gives a quadrotor position RMSE of about 5.0 cm, near the GPS noise floor, with errors increasing during cornering. Low variance across Monte Carlo seeds indicates that ESKF accuracy depends on sensor noise rather than cable geometry. The decentralized single-cable load-position estimate is poor (RMSE $\approx 2.5$\,m, Fig.~\ref{fig:estimation_error}), dominated by the unobservable tangential direction, with a step near $t\approx30$\,s when the figure-eight reverses and the tangential offset flips sign. Crucially, this estimate feeds only the tracking-neutral mass estimator and not the position loop, so it does not affect payload tracking ($33.8$\,cm); recovering the tangential direction via distributed multi-cable fusion is left to future work.

The concurrent-learning estimator admits taut-cable regressor samples into a fixed buffer and keeps the per-agent mass share close to the true $m_L/N = 1.0$\,kg throughout the flight (Fig.~\ref{fig:mass_convergence}), with a steady-state mean of $1.18$\,kg and a $0.01$\,kg spread across seeds; the buffer-driven update provides this without persistent excitation. The impact of removing it is quantified in Table~\ref{tab:ablation}.

To separate the estimator's identification role from any tracking role, we sweep the payload over $m_L\in\{3,6,9\}$\,kg (per-agent share $1.0$--$3.0$\,kg, $N=3$), running each case with the adaptive feedforward active (full) and disabled (no-CL); Table~\ref{tab:massshare} reports both. Two facts emerge. First, the estimate recovers the true share across the full $3\times$ range, with an absolute bias that stays within the Remark~\ref{rem:eps_bound} bound $\bar\varepsilon/\bar Y\approx0.2$\,kg and \emph{shrinks} with load ($+0.18$ down to $+0.02$\,kg): as the share grows the cables run more vertical and the additive modeling residual becomes a smaller fraction of $T_i\cos\zeta_i$. Second, the tracking RMSE is statistically identical with and without the adaptive feedforward at every mass---the per-axis altitude integrator absorbs the static-load deficit regardless of which feedforward supplies it. The mass-share estimate is therefore an \emph{identification} mechanism---the quantity that enables implicit coordination~\eqref{eq:force_convergence} without communication---rather than a tracking gain.

\begin{table}[t]
  \centering
  \caption{Mass-share identification across a $3\times$ payload sweep ($N=3$, 3 seeds each). $\hat\theta$ tracks the true share $m_L/N$ with absolute bias within the Remark~\ref{rem:eps_bound} bound ($\bar\varepsilon/\bar Y\approx0.2$\,kg); tracking RMSE is unchanged whether the estimate drives the feedforward (full) or not (no-CL).}
  \label{tab:massshare}
  \footnotesize
  \setlength{\tabcolsep}{4pt}
  \begin{tabular}{@{}cccccc@{}}
    \toprule
    $\boldsymbol{m_L}$ & \textbf{Share} & $\boldsymbol{\hat\theta}$ & \textbf{Bias} & \textbf{RMSE full} & \textbf{RMSE no-CL} \\
    \textbf{(kg)} & \textbf{(kg)} & \textbf{(kg)} & \textbf{(kg)} & \textbf{(cm)} & \textbf{(cm)} \\
    \midrule
    3 & 1.0 & $1.18\pm0.04$ & $+0.18$ & $31.6\pm1.4$ & $32.0\pm1.7$ \\
    6 & 2.0 & $2.11\pm0.09$ & $+0.11$ & $30.7\pm0.4$ & $31.4\pm0.6$ \\
    9 & 3.0 & $3.02\pm0.11$ & $+0.02$ & $33.3\pm0.4$ & $33.4\pm0.1$ \\
    \bottomrule
  \end{tabular}
  \vspace{-6pt}
\end{table}

Table~\ref{tab:failure_modes} shows the hazard-to-mitigation mapping and the measured constraint performance, and makes explicit which constraints are strictly held and which are only reduced. The tension filter maintains positive cable tension in steady flight (per-cable mean $13$--$17$\,N), with brief slack-side excursions to $\approx 0.7$\,N (below $T_{\min}=2$\,N) in the worst seed during the most aggressive cornering. The collision barrier, fed a $10$\,Hz neighbor-position broadcast, keeps the inter-agent clearance at $0.85$\,m on average, with a worst-case $0.67$\,m that dips just below the $d_{\min} = 0.8$\,m bound; the cable swing rate stays strictly within its $1.5$\,rad/s bound ($1.3$\,rad/s observed). The quadrotor tilt rides its $28.6^\circ$ cone and slightly exceeds it under load (mean $29^\circ$, worst $32^\circ$). The most-exceeded constraint is the cable angle, which reaches $48^\circ$ (limit $34.4^\circ$) during aggressive cornering, as the force-level filter has limited single-step authority over this relative-degree-two output. In summary, the filter holds the swing rate strictly and keeps tension, clearance, and tilt near their bounds with brief excursions, but does not strictly enforce the cable angle; reducing these residual violations---e.g.\ with explicit higher-order barriers on the swing and cable-angle dynamics---is left to future work.

Table~\ref{tab:ablation} isolates each layer's contribution with every loop closed through the estimator. The ESO disturbance feedforward is the dominant tracking benefit: removing it more than doubles the RMSE ($+110\%$) and lets the cable angle reach $68^\circ$, since the formation no longer rejects the Dryden wind. The CBF improves tracking by $19\%$ \emph{and} enforces the safety constraints---removing it both raises the error and collapses the worst-case inter-agent clearance to $0.32$\,m, confirming that the barrier, not the tracking controller, produces the separation. The concurrent-learning feedforward is near-neutral for tracking ($+0\%$): as the mass sweep (Table~\ref{tab:massshare}) confirms, the per-axis altitude integrator absorbs the static-load deficit at every payload, so the adaptive estimate's role is the explicit per-agent mass share (Fig.~\ref{fig:mass_convergence}) that enables implicit coordination, rather than a tracking gain. We note that the ESO is driven by a low-pass-filtered position estimate at a reduced observer bandwidth ($\omega_o = 8$\,rad/s); a higher bandwidth on the raw estimate instead differentiates measurement noise into the disturbance channel and erodes the very benefit the ablation reports.

\begin{table}[t]
  \centering
  \caption{Ablation results: payload tracking RMSE (cm).}
  \label{tab:ablation}
  \begin{tabular}{@{}lccc@{}}
    \toprule
    \textbf{Configuration} & \textbf{RMSE} & $\boldsymbol{\Delta}$ & \textbf{Max Angle} \\
    \midrule
    Full GPAC (baseline) & 33.6 & --- & 49$^\circ$ \\
    No concurrent learning & 33.6 & $+$0\% & 50$^\circ$ \\
    No ESO feedforward & 70.6 & $+$110\% & 68$^\circ$ \\
    No CBF safety filter & 40.1 & $+$19\% & 46$^\circ$ \\
    \bottomrule
  \end{tabular}
\end{table}

\begin{table}[t]
  \centering
  \caption{Cable tension statistics (N) during steady-state flight. Asymmetric cable lengths produce unequal load sharing.}
  \label{tab:tension_stats}
  \begin{tabular}{@{}lccccc@{}}
    \toprule
    \textbf{Cable} & $\boldsymbol{L_i}$ \textbf{(m)} & \textbf{Mean} & \textbf{Std} & \textbf{Min} & \textbf{Max} \\
    \midrule
    0 & 0.994 & 16.0 & 3.4 & 5.4 & 29.4 \\
    1 & 1.155 & 13.2 & 4.2 & 1.9 & 26.9 \\
    2 & 0.952 & 17.1 & 3.4 & 5.9 & 31.9 \\
    \bottomrule
  \end{tabular}
\end{table}

\begin{figure}[t]
  \centering
  \includegraphics[width=0.84\columnwidth]{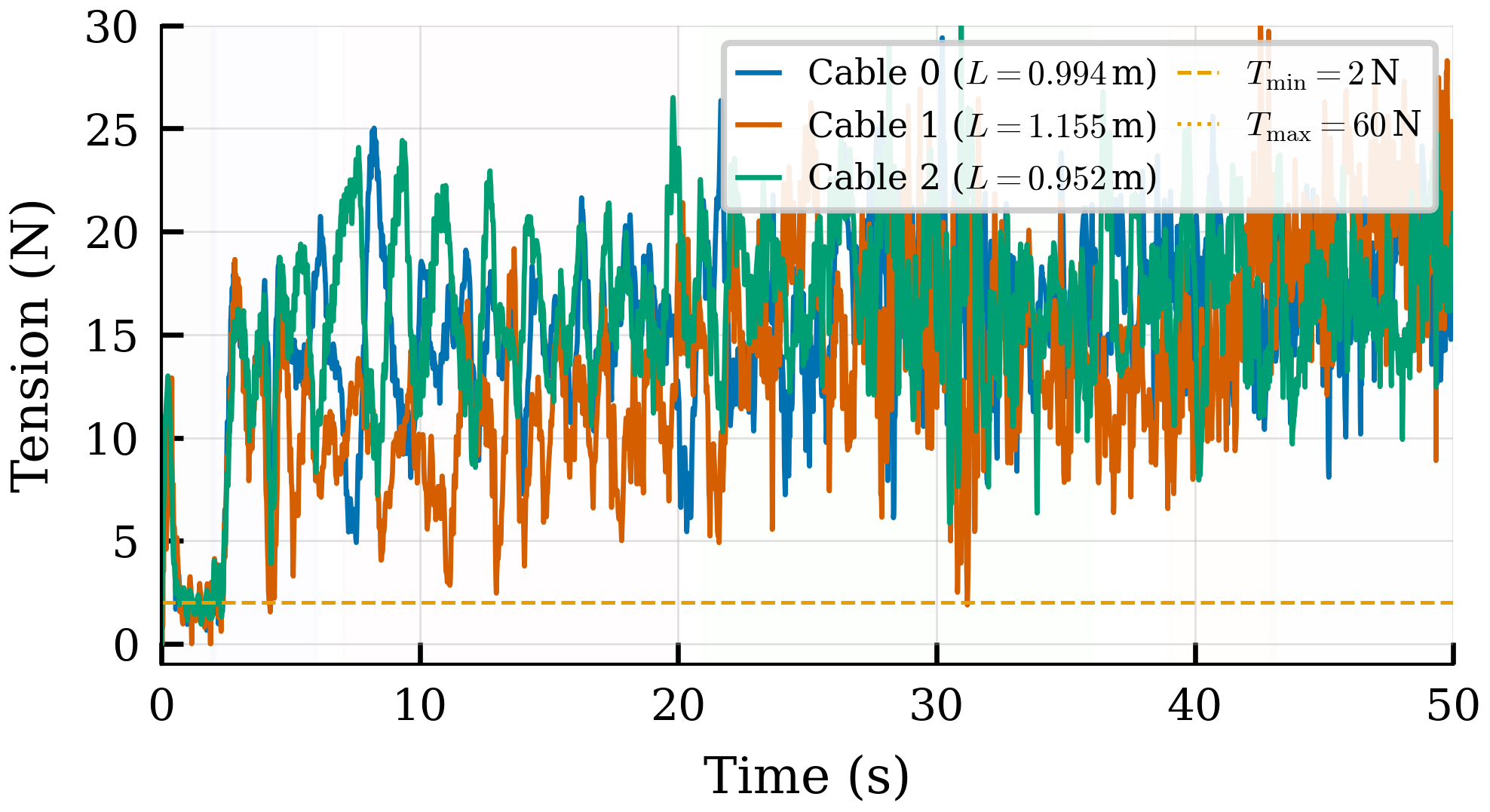}
  \caption{Cable tensions. Asymmetric lengths yield unequal load sharing, accommodated without coordination.}
  \label{fig:cable_tensions}
  \vspace{-5pt}
\end{figure}

\begin{table}[t]
  \centering
  \caption{Hazard-to-mitigation mapping with constraint limits and measured performance.}
  \label{tab:failure_modes}
  \begin{tabular}{@{}llccc@{}}
    \toprule
    \textbf{Hazard} & \textbf{Mitigation} & \textbf{Limit} & \textbf{Observed}\\
    \midrule
    Mass unknown & CL estimation & --- & $\hat{\theta}\!=\!1.18\!\pm\!0.01$\,kg\\
    Wind disturbance & ESO feedfwd & --- & $+110\%$ w/o (Tab.~\ref{tab:ablation})\\
    Cable slack & CBF tension & {[2, 60]\,N} & min $\approx 0.7$\,N (worst)\\
    Cable angle & CBF + anti-swing & $34.4^\circ$ & $48^\circ$\\
    Excessive tilt & CBF tilt & $28.6^\circ$ & $29^\circ$\\
    Swing rate & CBF swing & 1.5\,rad/s & 1.3\,rad/s\\
    Collision & CBF separation & 0.8\,m & $\geq 0.67$\,m\\
    Sensor noise & ESKF fusion & --- & 5.0\,cm RMSE\\
    \bottomrule
  \end{tabular}
\end{table}

\subsection{Team-Size Scaling}

To test team-size invariance directly, we run the \emph{identical} per-agent stack---no gains, geometry, or trajectory retuned---for $N\in\{2,\dots,6\}$, holding the per-agent share constant at $m_L/N = 1.0$\,kg (i.e.\ $m_L = N$\,kg) so that team size is decoupled from per-agent load. Each entry is a 3-seed mean (Table~\ref{tab:nscale}).

\begin{table}[t]
  \centering
  \caption{Team-size scaling at constant per-agent share ($m_L/N=1.0$\,kg). Identical per-agent controller; 3 seeds each. The nominal adjacent-agent spacing $2r\sin(\pi/N)$ at the fixed formation radius $r=0.6$\,m is the geometric driver of the $N\!\ge\!5$ degradation.}
  \label{tab:nscale}
  \footnotesize
  \setlength{\tabcolsep}{4pt}
  \begin{tabular}{@{}cccccc@{}}
    \toprule
    $\boldsymbol{N}$ & \textbf{RMSE (cm)} & \textbf{Clear.\ (m)} & \textbf{Tilt} & $\boldsymbol{\hat{\theta}}$ \textbf{(kg)} & \textbf{Spacing (m)} \\
    \midrule
    2 & $32.6\pm1.5$ & $0.96$ & $28.8^\circ$ & $1.21$ & $1.20$ \\
    3 & $31.6\pm1.4$ & $0.87$ & $29.7^\circ$ & $1.18$ & $1.04$ \\
    4 & $31.2\pm1.4$ & $0.49$ & $30.3^\circ$ & $1.27$ & $0.85$ \\
    5 & $51.5\pm1.3$ & $0.21$ & $37.0^\circ$ & $1.09$ & $0.71$ \\
    6 & $62.0\pm3.3$ & $0.12$ & $39.5^\circ$ & $1.14$ & $0.60$ \\
    \bottomrule
  \end{tabular}
  \vspace{-4pt}
\end{table}

For $N\in\{2,3,4\}$ the payload RMSE is flat at $31$--$33$\,cm and the per-agent mass-share estimate tracks its $1.0$\,kg target (within the same $\sim$$20\%$ bias seen at $N=3$): the identical local control law is \emph{structurally reusable} across team size, and tracking performance is team-size-invariant. Safety margins, however, degrade earlier than tracking, and the controller's reusability should not be read as safe scaling. The collision clearance stays above $d_{\min}=0.8$\,m only for $N\le3$ ($0.87$\,m at $N=3$); by $N=4$ it has already fallen to $0.49$\,m, and the quadrotor tilt sits marginally above the nominal $28.6^\circ$ cone at every $N$ (consistent with the baseline result, where the tilt rides its limit). The cause is formation geometry, not the control law: with a fixed radius $r=0.6$\,m the nominal adjacent-agent spacing $2r\sin(\pi/N)$ shrinks monotonically ($1.04$\,m at $N=3$, $0.85$ at $N=4$, $0.71$ at $N=5$, $0.60$ at $N=6$), and the realized clearance falls faster still as the loaded agents lean inward, so the collision filter is increasingly asked to maintain a separation the formation cannot geometrically provide and instead trades it for tilt. By $N\ge5$ the nominal spacing itself drops below $d_{\min}$ and tracking collapses. Restoring clearance requires scaling $r$ with $N$, which for the fixed $1$\,m cable length splays the cables past the angle limit $\zeta_{\max}=34.4^\circ$; larger teams therefore require proportionally longer cables. In short, the per-agent controller scales without modification, but the fixed-radius formation provides reliable inter-agent clearance only up to $N=3$ (worst-case $0.67$\,m even there); beyond that, the formation radius and cable length---not the controller---set the practical team-size ceiling.

\section{Conclusion}\label{sec:conclusion}
This paper presented GPAC, a four-layer hierarchical controller for decentralized cooperative aerial transport on $\SEthree \times (\Sph^2)^N$. The architecture's central property is that each quadrotor runs an identical control stack using only local sensors and its cable, yet the collective system achieves coordinated payload transport with subsystem-level stability guarantees and priority-ordered safety supervision. No knowledge of the team size $N$ or payload mass $m_L$ is required; no payload, cable, or adaptive states are exchanged between agents---each reconstructs the payload position locally from its own cable---and the only inter-agent communication is a low-rate neighbor-position broadcast for collision avoidance.

The hazard-oriented decomposition---each layer targeting a specific failure mode (Table~\ref{tab:failure_modes})---enables three properties that are individually well-studied but rarely combined: geometric manifold-based stability, decentralized adaptive learning, and runtime safety supervision. Hierarchical timescale separation limits cross-layer fault propagation, the absence of a central coordinator removes that single point of failure, and the per-agent control law is structurally reusable across team sizes (Table~\ref{tab:nscale}). High-fidelity simulation with flexible cables, onboard sensor fusion, and wind turbulence---with every loop closed through the ESKF---confirms the architecture operates closed-loop at low per-agent computational cost, achieving a 33.8 cm mean payload-tracking RMSE while maintaining positive cable tension (except brief slack-side excursions), the inter-agent clearance above $0.67$\,m at $N=3$, the swing rate within bounds, and the tilt at its limit. Two constraints are not strictly enforced: the cable angle, a relative-degree-two output, is briefly exceeded during aggressive cornering, and the inter-agent clearance degrades as team size grows at fixed formation radius (Section~\ref{sec:results}). Bounding the cable angle with a higher-order barrier on the swing dynamics, and scaling the formation geometry with $N$, are left to future work.

The present results are simulation-only, making hardware flight experiments the immediate next step. Two architectural limitations motivate further algorithmic development: the decentralized single-cable load-position estimate is limited by observability (though it lies outside the tracking loop and thus does not affect payload tracking), and the sequential CBF projection guarantees only priority-ordered feasibility—rather than full ISSf—when multiple constraints activate simultaneously. Accordingly, future work targets distributed consensus-based estimation to recover the unobservable tangential direction while preserving communication efficiency.

\bibliographystyle{IEEEtran}
\bibliography{References}

\end{document}